\documentclass[a4paper,12pt]{article}

\usepackage{a4wide}

\hyphenation{analysis onemax Doerr parameter leadingones Hoeffding develop-ment Krejca Carola}

\usepackage{float} 
\usepackage[utf8]{inputenc}
\usepackage[unicode=true,hypertexnames=false]{hyperref}
\usepackage{amsmath,amssymb,amsthm,amsfonts}
\usepackage{dsfont} 
\usepackage{mathtools} 
\usepackage[linesnumbered,ruled]{algorithm2e}
\let\oldnl\nl
\newcommand{\nonl}{\renewcommand{\nl}{\let\nl\oldnl}}
\usepackage{xspace}
\usepackage[dvipsnames,table]{xcolor}
\usepackage{soul} 
\definecolor{lightblue}{rgb}{0.7,0.7,1}
\sethlcolor{lightblue} 

\usepackage{tikz}
\usetikzlibrary{decorations.pathreplacing,calc,tikzmark}
\usepackage{multirow}
\usepackage{booktabs}

\usepackage{pgfplots}
\pgfplotsset{compat=newest}
\pgfplotscreateplotcyclelist{runtimes}{%
    thick,blue,mark=triangle*\\
    thick,red,mark=*\\
}

\clubpenalty=10000
\widowpenalty=10000
\allowdisplaybreaks

\newcommand{\oea}{\mbox{$(1 + 1)$~EA}\xspace}
\newcommand{\oplea}{\mbox{$(1+\lambda)$~EA}\xspace}
\newcommand{\mpoea}{\mbox{$(\mu+1)$~EA}\xspace}

\newcommand{\mclea}{\mbox{$(\mu,\lambda)$~EA}\xspace}
\newcommand{\oclea}{\mbox{$(1,\lambda)$~EA}\xspace}

\newcommand{\onemax}{\textsc{OneMax}\xspace}

\newcommand{\leadingones}{\textsc{Leading\-Ones}\xspace}

\newcommand{\N}{{\mathbb N}}
\newcommand{\R}{{\mathbb R}}
\newcommand{\I}{\mathds{I}}

\newtheorem{theorem}{Theorem}
\newtheorem{lemma}[theorem]{Lemma}

\begin{document}

\title{Already Moderate Population Sizes Provably Yield Strong Robustness to Noise}

\author{Denis Antipov \\
		Optimisation and Logistics,\\ 
        School of Computer and Mathematical Sciences,\\ 
        The University of Adelaide \\
        Adelaide, Australia \\
		\and
		Benjamin Doerr \\
		Laboratoire d'Informatique (LIX), \\
		CNRS, \'Ecole Polytechnique, \\
		Institut Polytechnique de Paris \\
		Palaiseau, France \\
		\and
        Alexandra Ivanova \\
        HSE University, Skoltech \\
        Moscow, Russia \\
}

\maketitle
{\sloppy

\begin{abstract}
    Experience shows that typical evolutionary algorithms can cope well with stochastic disturbances such as noisy function evaluations.
    In this first mathematical runtime analysis of the $(1+\lambda)$ and $(1,\lambda)$ evolutionary algorithms in the presence of prior bit-wise noise, we show that both algorithms can tolerate constant noise probabilities without increasing the asymptotic runtime on the OneMax benchmark. For this, a population size $\lambda$ suffices that is at least logarithmic in the problem size $n$. The only previous result in this direction regarded the less realistic one-bit noise model, required a population size super-linear in the problem size, and proved a runtime guarantee roughly cubic in the noiseless runtime for the OneMax benchmark. Our significantly stronger results are based on the novel proof argument that the noiseless offspring can be seen as a biased uniform crossover between the parent and the noisy offspring. We are optimistic that this technique will find applications also in future mathematical runtime analyses of evolutionary algorithms.
\end{abstract}

\section{Introduction}
\label{sec:intro}

The mathematical runtime analysis has accompanied and supported the design and analysis of evolutionary algorithms (EAs) for more than 30 years. It has led to a deeper understanding of many important aspects of evolutionary computation~\cite{NeumannW10,AugerD11,Jansen13,ZhouYQ19,DoerrN20}.

While this area has also studied how EAs cope with noise, that is, a stochastically disturbed access to the true problem instance, the rigorous understanding of this aspect of evolutionary computation is rather limited. Most previous works regard simple algorithms like the \oea and show that these, without particular adjustments, can stand a moderate level of noise, but usually not more than one noisy fitness evaluation every roughly $n / \log(n)$ iterations for problems with bit-string representation of length~$n$. 

Some works have shown that larger population sizes improve the robustness to noise, but these still do not show a very satisfying picture. For example, in the work closest to ours, Gießen and Kötzing~\cite{GiessenK16} showed that the \oplea can stand random one-bit prior noise with arbitrary rate $q \in [0,1]$ when optimizing the \onemax benchmark. However, their result requires a relatively large population size of at least $\lambda = \Omega(\frac 1q n \log n)$ and then gives a runtime guarantee of $O(\frac 1q n^2 \lambda)$ fitness evaluations, significantly above the guarantee $O(\max\{n\lambda \frac{\log\log \lambda}{\log \lambda}, n \log n\})$ for the noiseless setting~\cite{DoerrK15}. 
We also note that both the required population size and the runtime guarantee contain a factor of~$\frac 1q$, that is, they become worse for lower noise rates. 
This counter-intuitive dependence on the noise intensity together with the weak runtime guarantee of order at least $n^3 \log n$ suggest that this problem is not yet fully understood, and this is why we undertake a new attempt to analyze how the \oplea solves the \onemax problem in the presence of noise.

As a main result, we prove that the \oplea with any population size $\lambda \ge C \ln(n)$, $C$ a suitable constant, can optimize the \onemax problem in the presence of bit-wise prior noise with up to constant noise probability per iteration, in asymptotically the same time as when no noise is present. We note that we regard a different noise model than in~\cite{GiessenK16}, namely independent bit-wise prior noise. This model is generally regarded as more realistic because the true and the noisy fitness can deviate by arbitrary amounts. There is no reason to believe that the \oplea should suffer less from noise in this noise model. 
In fact, we are convinced that results analogous to ours hold in the one-bit noise model regarded in~\cite{GiessenK16}, and that such results can be proven with a variant of our general proof method. 
We do not conduct these proofs since, as often in runtime analyses, the precise proofs rely on the details of the particular algorithm, objective function, and noise model. 
For that reason, even though our general approach seems to apply also to one-bit noise, the proofs would differ in many small details. We therefore leave this task for future work. 

This result is the first tight analysis of a standard population-based EA in the presence of noise in a standard model (in fact, it is the first analysis of the \oplea in the presence of bit-wise noise). 
Our result shows that already moderate population sizes can yield an enormous robustness. 
We note that for $\lambda = \Theta(\log n)$, our runtime guarantee is $O(n \log n)$, that is, the same as for the simple \oea in the noiseless setting. Hence the larger population size used here to obtain robustness does not lead to an increase in the runtime. 
We also recall the general lower bound of $\Omega(n \log n)$ valid for all unary unbiased black-box algorithm~\cite{LehreW12}, which shows that our $O(n \log n)$ bound is asymptotically tight and that a better performance is not possible in the realm of mutation-based unbiased evolutionary algorithms. 

We extend our analysis to the non-elitist \oclea and show the same results for this algorithm. In the noisefree setting, the \oplea and the \oclea are known to have a similar performance when the population size is at least logarithmic, basically because the \oclea becomes near-elitist since with high probability at least one offspring is equal to the parent. That this similarity of the two algorithms extends to noisy settings was, a priori, not obvious. We note that this is the first runtime analysis of the \oclea in the presence of prior noise. 

Our results are based on a novel proof argument, namely that the noiseless offspring can be seen as a biased crossover between the parent and the noisy offspring. This allows to obtain probabilistic information on the offspring given the parent and the noisy offspring. Note that the noisy offspring is visible to the algorithm (and thus in a sense also to our proofs) via its fitness, whereas the non-noisy offspring is not visible, but is of course what really counts for the further run of the algorithm. We refer to Sections~\ref{sec:technique} and~\ref{sec:rta} for more details. 

Overall, our work gives two main insights, namely (1)~that using EAs with at least a moderate population size can lead to a strong robustness against noise, and this without performance losses even when compared with using the optimal population size in the noiseless setting, and (2)~that such processes can be analyzed with mathematical means, in particular, with the tools we developed to gain from the noisy offspring probabilistic information on the noiseless offspring.

We complement our theoretical analysis with a small experimental study, aiming at answering two questions which our asymptotic runtime analysis naturally could not answer. We observe that, as predicted by theoretical considerations in the noisefree case, also in the noisy setting there is no significant performance difference between the \oplea and the \oclea. Also, we show that the asymptotic runtime advantage of the \oplea over the \oea at constant noise rates is clearly visible already for moderate population sizes.

This work is organized as follows. In Section~\ref{sec:related}, we review the most relevant previous works. We describe the benchmark, noise model, and algorithms in Section~\ref{sec:preliminaries}. Our main technical tool, the analysis of the relation between parent, true offspring, and noisy offspring, is developed in Section~\ref{sec:technique}. We use this tool in Section~\ref{sec:rta} to conduct the runtime analyses leading to the main results of this work. Our experimental study can be found in Section~\ref{sec:experiments}. The paper ends with a short conclusion and outlook. 

\section{Related Works}
\label{sec:related}

One of the most recent and detailed overviews of mathematical analyses of evolutionary algorithms in noisy environments can be found in the recent paper~\cite{LehreQ24}. We refer the reader to this work and discuss now only the results most relevant to ours.

\textbf{For the \oea}, several results show that it can tolerate a small amount of noise, but becomes highly inefficient for larger noise rates. In~\cite{GiessenK16}, it was shown that on \onemax with either prior bit-wise noise flipping each bit with probability $\frac{q}{n}$ or one-bit noise flipping exactly one random bit with probability $q$, the \oea keeps its $O(n\log(n))$ runtime from the noiseless setting when $q = O(\frac{1}{n})$. For $q = O(\frac{\log(n)}{n})$, the runtime is still polynomial, but for $q = \omega(\frac{\log(n)}{n})$ the runtime is super-polynomial. That is, the \oea can cope with noise only if it happens at most, on average, every $\Theta(\frac{n}{\log(n)})$ iterations. A similar effect was observed on \leadingones and made very precise in~\cite{Sudholt21}, where for the same noise models for all $q \le \frac{1}{2}$ a runtime of order $n^2 \exp(\Theta(\min\{qn^2, n\}))$ was shown. Hence here already from $q = 1/n^2$ on the performance drops rapidly with increasing noise rate.

Posterior additive Gaussian noise was studied in~\cite{GiessenK16,QianYTJYZ18}. In~\cite{GiessenK16} the authors showed that the \oea can solve \onemax and \leadingones in its noiseless time when the variance $\sigma^2$ of the Gaussian distribution is at most $\frac{1}{4\log(n)}$ and $\frac{1}{12en^2}$, respectively. In~\cite{QianYTJYZ18} it was shown that for large variance, namely $1$ and $n^2$, respectively, the runtime on these problems is exponential.

A more general approach based on  estimating the probability that the noise inverts the comparison of two individuals was used in~\cite{Dang-NhuDDIN18}. This led to a polynomial runtime guarantee on \onemax for every noise model for which the probability to invert a comparison via noise is at most $\frac{c\ln(n)}{n}$ for some constant $c$. Also, this approach allowed to make the results of~\cite{GiessenK16} more precise, among others, giving that the $O(n \log(n))$ runtime bound remains valid up to $q = c' \log(\log(n)) / n$ with $c'$ a constant specified in~\cite{Dang-NhuDDIN18}.

It is also worth mentioning the papers~\cite{QianYTJYZ18, QianBJT19, QianBYTY21}, which showed that resampling an individual sufficiently often can help to determine its true fitness. This can make the \oea robust to noise, but often requires large (more than $n^3$) numbers of samples. This strategy, however, does not work on \leadingones with high-rate prior noise. Additionally, in~\cite{DoerrS19} it was shown that using a median instead of the mean of the samples can significantly reduce the number of resamplings necessary to yield a reasonable performance, e.g.,  to $\Theta(\log(n))$ if the noise is not too strong.

\textbf{For population-based EAs}  there are considerably fewer results. In~\cite{GiessenK16} it was shown that for the \oplea and the \mpoea, the population sizes $\mu = \Omega(\frac{\log(n)}{q})$ and $\lambda = \Omega(\frac{1}{q}n\log(n))$ can lead to polynomial runtimes on \onemax with one-bit noise occurring with probability~$q$. The runtimes in this case are $O(\mu n \log(n))$ for the \mpoea, mildly above its $O(\mu n)$ noisefree runtime guarantee~\cite{Witt06}, and $O(\frac{1}{q}n^2\lambda)$ for the \oplea, significantly above its noisefree runtime guarantee of $O(n \lambda \frac{\log\log\log \lambda}{\log\log \lambda})$~\cite{DoerrK15}. This result definitely suggests that the larger populations, in particular, larger parent populations, can be beneficial to cope with noise. However, these results counter-intuitively require larger population sizes to cope with smaller noise rates. 

Another result for population-based EAs was obtained in~\cite{QianBYTY21}, where the authors considered a symmetric noise model. The authors showed that with logarithmic population sizes both the \oplea and the \mpoea have an $\tilde O(n)$ runtime on \onemax, and also that smaller population sizes for the \oplea lead to exponential runtimes. The downside of this result is that the symmetric noise model is quite artificial. In particular, for the \mpoea it implies that if we have a single individual with  best fitness, it is removed from the population only if the noise affects all  individuals in the population (which is very unlikely when $\mu$ is at least logarithmic). For the \oplea it implies that we lose fitness only when all $\Theta(\lambda)$ copies of the current individual are affected by the noise, which is also unlikely for logarithmic or larger values of $\lambda$. With these particular properties, these results appear hard to extend to the more common noise models.

In~\cite{Dang-NhuDDIN18} the authors write that the \oplea can optimize \onemax under a one-bit noise in $O(n\log(n))$ runtime, if $\lambda = \Theta(\log(np))$, where $p$ is the probability that the noise flips a bit. However, this result is only described informally, there is no theorem stating it, nor a proof or a proof sketch. 

\textbf{The study of non-elitist algorithms} in the presence of noise is confined to the three works~\cite{DangL15foga,LehreQ24,LehreQ23}. In~\cite{DangL15foga}, it is shown that a non-elitist EA constructing the next population by $\lambda$ times independently selecting two parents and taking the mutant of the better one into the next population, is very robust to prior noise with up to constant noise rates, keeping its noisefree runtime apart from a $\log\log n$ factor. Besides many other results, this result was sharpened and extended to more general noise models in~\cite{LehreQ24}, however, to the best of our understanding only for mutation rate below $1/n$. 
That paper, as well as the subsequent work~\cite{LehreQ23}, also give results for the symmetric noise model, but as discussed above, we are not optimistic that such results can be extended to more realistic noise models. 

\textbf{Other slightly less relevant results} include studies of ant colony optimizers (ACO)~\cite{FriedrichKKS16}, estimation of distribution algorithms such as the cGA~\cite{FriedrichKKS17} and the UMDA~\cite{LehreN21}, and voting algorithms~\cite{RoweA19} in the presence of noise. For an ACO with a fitness-proportional update it was shown in~\cite{FriedrichKKS16} that with a small evaporation factor $\rho=O(\frac{1}{n^3\log(n)})$ this algorithm can solve \onemax in $O(n^2\log(n)/\rho)$ time for both Gaussian or 1-bit noise with any noise rate (note that this runtime is of order larger than $n^5$). For the cGA on \onemax with Gaussian noise with variance $\sigma^2$ it was shown that with population size $K = \omega(\sigma\sqrt{n}\log(n))$ the optimum is found in $O(K\sigma^2\sqrt{n}\log(Kn))$ time~\cite{FriedrichKKS17}. With an automated choice of the algorithm parameter~$K$, the runtime stemming from the optimal choice of~$K$ can be obtained~\cite{ZhengD23}, and this without knowing in advance the noise variance~$\sigma^2$. In~\cite{LehreN21}, the UMDA was studied on \leadingones with a one-bit noise with rate less than one. It was shown that with parameters $\mu$ and $\lambda$ which are at least logarithmic in $n$, the runtime is $O(n\lambda\log(\lambda) + n^2)$. The voting algorithm was studied on \onemax with Gaussian noise, and it was shown that if the noise rate $\sigma^2$ is at least $\frac{3n}{8}$, then the optimum is correctly found after $O(\sigma^2 \log(n))$ samples.

Frozen noise models, where noise is applied to the objective function once before the algorithm is run, were studied in~\cite{FriedrichKNR22,JorritsmaLS23}. The main result is that RLS, the $(1 + 1)$ EA, and the $(1 + \lambda)$ EA are not efficient on those rugged landscapes, but the compact genetic algorithm and the $(1, \lambda)$ EA are.

Noisy optimization has also been studied for multi-objective EAs~\cite{DinotDHW23,DangOSS23gecco}, but the very different population dynamics render it difficult to compare these results with the single-objective setting.

\section{Preliminaries}\label{sec:preliminaries}

\subsection{OneMax with Bit-wise Noise}

\onemax is one of the most common benchmark functions used in theoretical studies of EAs. It returns the number of one-bits in its argument, which is formally defined by 
\begin{align*}
    \onemax(x) = \sum_{i = 1}^n x_i
\end{align*}
for all $x \in \{0, 1\}^n$.
Despite its simplicity, this benchmark has had an enormous impact on the field. It was the basis of the first mathematical runtime analyses at all~\cite{Muhlenbein92,Rudolph97,DrosteJW02}, the first analyses of EAs in the presence of noise and dynamic changes~\cite{Droste02,Droste04}, the first analyses of population-based algorithms~\cite{JansenJW05, Witt06, AntipovD21algo}, or the first analyses of estimation-of-distribution algorithms and ant-colony optimizers~\cite{Droste05,Gutjahr08,NeumannW09}. These works triggered the development of many analysis methods that are used regularly since then. The study of how EAs optimize \onemax also led to the discovery of many new algorithmic ideas, e.g., various ways to dynamically adjust the parameters of an EA~\cite{LassigS11,DoerrDE15,DoerrWY21}.

In this paper we study the optimization of \onemax under \emph{bit-wise prior noise}. In this noise model, we do not learn the correct fitness of~$x$, but the fitness of some bit string obtained from $x$ by flipping each bit independently with probability $\frac{q}{n}$. We call $\frac{q}{n}$ the \emph{noise rate}. In this paper we consider $q = O(1)$, so that the noise rate is at most $O(\frac{1}{n})$, which means that we might have a constant probability that the noise occurs and we learn a possibly wrong fitness value.

\subsection{The \oplea and the \oclea}
\label{sec:algorithms}

The two most simple EAs with a non-trivial offspring population are the \oplea and \oclea. Both algorithms work according to the following scheme. They keep a bit-string $x$, which is called the \emph{current individual} and which is initialized with a random bit string. Then in each iteration they create $\lambda$ individuals, independently, by applying standard bit mutation with rate $\frac{\chi}{n}$ to the current individual $x$ (that is, each bit in a copy of $x$ is flipped with probability $\frac{\chi}{n}$ independently from other bits). A best of these offspring is selected as the mutation winner $y$. Then the only difference in two algorithm occurs. The \oplea compares $y$ with $x$ and if $y$ is not worse, then it replaces $x$ in the next iteration. Otherwise $x$ stays the same. In the \oclea we use a non-elitist selection, and $y$ always replaces $x$, even if it is worse. In this paper we use the standard assumption that $\chi = \Theta(1)$, since smaller mutation rates usually reduce the optimization speed and larger rates can lead to an exponential runtime even on monotone functions~\cite{DoerrJSWZ13}. The pseudocodes of the \oclea and the \oplea are shown in Algorithm~\ref{alg:pseudo}, where the only difference between them is in lines~\ref{line:oclea} and~\ref{line:oplea}.

In the context of noisy optimization, it is important to note that for the \oplea we re-compute the fitness of the current individual in each iteration and do not reuse the value computed previously. This allows to correct the situation that we have a current individual for which we believe its fitness to be significantly higher (due to a noisy fitness evaluation in the past). In such a situation the EA may get stuck for a long time, simply because all offspring appear to be worse than this parent. That this problem is real and can be seen, e.g., by comparing the results of~\cite{SudholtT12} (without reevaluations) and~\cite{DoerrHK12ants} (with reevaluations).

To take into account the specifics of noisy optimization, we use the following point of view and notation when we consider an iteration of the \oplea or the \oclea. We denote the $\lambda$ offspring of $x$ created via standard bit mutation by $x^{(i)}$ (for $i \in [1..\lambda]$). Then when we apply noise to each of $\lambda$ individuals independently, we obtain $\lambda$ \emph{noisy} individuals $\tilde x^{(i)}$, where for each $i \in [1..\lambda]$ individual $\tilde x^{(i)}$ is obtained from $x^{(i)}$. When we talk about an arbitrary offspring and its noisy version, we denote them by $x'$ and $\tilde x'$ correspondingly. 
After creating noisy offspring we evaluate their \onemax values and we choose an individual $\tilde y$ from the $\lambda$ noisy ones with the best fitness value (the ties are broken uniformly at random) as the mutation winner and we denote its non-noisy parent by $y$. Then for the \oplea we also apply noise to $x$ before we compare it with $\tilde y$, and thus we get a noisy individual $\tilde x$ which competes with $\tilde y$.  

\begin{algorithm}[t]%
\caption{The \oclea and the \oplea maximizing a function $f:\{0,1\}^n \rightarrow \R$.}\label{alg:pseudo}
	\textbf{Initialization:} 
	Sample $x \in \{0,1\}^{n}$ uniformly at random and evaluate $f(x)$\;
    \textbf{Optimization:}
    \For{$t=1,2,3,\ldots$}{
		\For{$i=1,\ldots,\lambda$}{
			\label{line:k}
			$x^{(i)} \gets$ copy of $x$\;
                Flip each bit in $x^{(i)}$ with probability $\frac{\chi}{n}$\;
			evaluate $f(x^{(i)})$\;
		}
		$y \leftarrow \arg\max\{f(x^{(i)}) \mid i \in [\lambda]\}$\; 	
        \lIf{running \oclea}{$x \leftarrow y$\label{line:oclea}}
		\lIf{running \oplea and $f(y)\geq f(x)$}{$x \leftarrow y$\label{line:oplea}}
	}
\end{algorithm}

\subsection{Drift Analysis}

Drift analysis is a rapidly developing set of tools which are widely used in the analysis of random search heuristics. They help to transform easy-to-obtain information about the expected progress into bounds on the expected runtime. In this paper we use the following variable drift theorem, which was first introduced in~\cite{MitavskiyRC09, Johannsen10}. We use its simplified version for processes over integer values from~\cite{DoerrDY20}.

\begin{theorem}[Theorem 6 in~\cite{DoerrDY20}]
\label{theorem:variable_drift}
    Let $(X_t)_{t \in \N}$ be a sequence of random variables in $[0..n]$ and let T be the random variable that denotes the earliest point in time $t \ge 0$ such that $X_t = 0$. Suppose that there exists a monotonically increasing function $h: [1..n] \mapsto \R_0^+$ such that
    \begin{align*}
        E[X_t - X_{t + 1} \mid X_t] \ge h(X_t)
    \end{align*}
    holds for all $t < T$ (as an inequality of random variables). Then 
    \begin{align*}
        E[T \mid X_0] \le \sum_{i = 1}^{X_0} \frac{1}{h(i)}.
    \end{align*}
\end{theorem}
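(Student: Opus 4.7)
The plan is to reduce the statement to the standard additive drift theorem via a well-chosen potential transformation. I would define a function $g : [0..n] \to \R_0^+$ by $g(0) = 0$ and
\[
g(k) = \sum_{i=1}^{k} \frac{1}{h(i)}
\]
for $k \in [1..n]$, and consider the transformed process $Y_t := g(X_t)$. The goal is to show that $E[Y_t - Y_{t+1} \mid X_t] \ge 1$ whenever $X_t \ge 1$. Once this is in place, the classical additive drift theorem applied to the nonnegative process $(Y_t)$ yields $E[T \mid X_0] \le Y_0 = g(X_0) = \sum_{i=1}^{X_0} 1/h(i)$, which is precisely the asserted bound (note that $Y_t = 0$ iff $X_t = 0$, so the hitting time $T$ is the same for both processes).

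The heart of the argument is the deterministic pointwise inequality
\[
g(X_t) - g(X_{t+1}) \;\ge\; \frac{X_t - X_{t+1}}{h(X_t)},
\]
valid for every realization of $X_{t+1} \in [0..n]$. Taking conditional expectation and invoking the hypothesis $E[X_t - X_{t+1} \mid X_t] \ge h(X_t)$ then yields the desired additive drift of at least $1$. To prove the pointwise inequality I split on the sign of $X_t - X_{t+1}$. If $X_{t+1} \le X_t$, then $g(X_t) - g(X_{t+1}) = \sum_{i=X_{t+1}+1}^{X_t} 1/h(i)$, and by monotonicity of $h$ each of these $X_t - X_{t+1}$ summands satisfies $1/h(i) \ge 1/h(X_t)$, so the sum is at least $(X_t - X_{t+1})/h(X_t)$. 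If $X_{t+1} > X_t$, then $g(X_t) - g(X_{t+1}) = -\sum_{i=X_t+1}^{X_{t+1}} 1/h(i)$, and each summand is now at most $1/h(X_t + 1) \le 1/h(X_t)$, so the (negated) sum is at least $-(X_{t+1} - X_t)/h(X_t) = (X_t - X_{t+1})/h(X_t)$.

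The main subtlety to watch is the asymmetric use of monotonicity: the single quantity $1/h(X_t)$ must simultaneously serve as a lower bound on $1/h(i)$ for indices $i \le X_t$ and as an upper bound on $1/h(i)$ for indices $i > X_t$, and only the monotonicity hypothesis on $h$ makes this double role possible. This is precisely what lets the process have arbitrary upward jumps without spoiling the drift; dropping monotonicity would force a truncation or clipping argument. Beyond this point everything is routine: nonnegativity of $g$ follows from $h$ taking values in $\R_0^+$ (implicitly strictly positive on $[1..n]$, else the claimed bound is vacuous), and the additive drift theorem for nonnegative processes closes the proof without any further constants or logarithmic factors.
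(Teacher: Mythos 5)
Your proof is correct. Note that the paper does not prove this statement at all---it imports it verbatim as Theorem~6 of~\cite{DoerrDY20}---so there is no in-paper argument to compare against; your reduction to additive drift via the potential $g(k)=\sum_{i=1}^{k}1/h(i)$, with the pointwise inequality $g(X_t)-g(X_{t+1})\ge (X_t-X_{t+1})/h(X_t)$ split on the sign of the step and using monotonicity of $h$ in both directions, is exactly the standard proof of this variable drift theorem, and all the details (finite state space, $g(k)=0$ iff $k=0$, strict positivity of $h$ on $[1..n]$ for the bound to be non-vacuous) are handled properly.
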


\subsection{Auxiliary Tools}

The following lemmas introduce several tools which will be useful in our proofs.

\begin{lemma}[Lemma 1.4.9 in~\cite{Doerr20bookchapter}]
\label{lem:binom-bound}
    For all $n \in \N$ and $k \in [1..n]$, we have
    $$\binom{n}{k} \geq \left(\frac{n}{k}\right)^k.$$
\end{lemma}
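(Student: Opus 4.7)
The plan is to write $\binom{n}{k}$ as a product of $k$ ratios and then bound each ratio from below by $n/k$.

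First I would use the identity
\begin{align*}
    \binom{n}{k} = \frac{n!}{k!(n-k)!} = \prod_{i=0}^{k-1} \frac{n-i}{k-i},
\end{align*}
which follows by cancelling $(n-k)!$ from $n!$ in the numerator (leaving the falling factorial $n(n-1)\cdots(n-k+1)$) and then pairing each factor $n-i$ with the corresponding factor $k-i$ from $k!$.

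Second, I would show that for every $i \in [0..k-1]$ we have the inequality $\frac{n-i}{k-i} \ge \frac{n}{k}$. Cross-multiplying (both denominators are positive because $i \le k-1 < k \le n$) reduces this to $k(n-i) \ge n(k-i)$, i.e., $-ki \ge -ni$, which in turn is equivalent to $(n-k)i \ge 0$. The latter holds since $n \ge k$ by hypothesis and $i \ge 0$.

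Finally, multiplying these $k$ inequalities together yields
\begin{align*}
    \binom{n}{k} = \prod_{i=0}^{k-1} \frac{n-i}{k-i} \ge \prod_{i=0}^{k-1} \frac{n}{k} = \left(\frac{n}{k}\right)^{k},
\end{align*}
which is the claim. I do not foresee any real obstacle: the entire argument is a short algebraic manipulation, and the only thing to keep track of is that all denominators are strictly positive so that cross-multiplication preserves the inequality.
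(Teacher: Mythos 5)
Your proof is correct and complete; it is the standard argument (writing $\binom{n}{k}$ as $\prod_{i=0}^{k-1}\frac{n-i}{k-i}$ and bounding each factor by $\frac{n}{k}$), which is exactly how the cited reference proves it. The paper itself gives no proof, simply quoting the lemma from~\cite{Doerr20bookchapter}, so there is nothing further to compare.
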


\begin{lemma}[Inequality 3.6.2 in~\cite{vasic2012analytic}]
\label{lem:bound_inequality}
    If $n$ and $x$ are real numbers such that $0 \leq x \leq n$ and $n > 0$, then 
    $$\left(1 - \frac{x}{n} \right)^n \geq e^{-x} - \frac{x^2}{2n}.$$
\end{lemma}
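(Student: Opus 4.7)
The plan is to reduce the claimed inequality to a simple area estimate by representing the difference $e^{-x} - (1-x/n)^n$ as an explicit integral whose integrand is easy to bound.

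First I would introduce the auxiliary function $f(t) := (1-t/n)^n e^{t}$ on $[0,x]$. Since $f(0) = 1$, the fundamental theorem of calculus gives $f(x) - 1 = \int_0^x f'(t)\, dt$, and a product-rule computation yields
\[ f'(t) \;=\; -\frac{t}{n}\,(1-t/n)^{n-1}\, e^{t}. \]
Multiplying the resulting identity by $e^{-x}$ and rearranging produces the clean representation
\[ e^{-x} - \left(1 - \frac{x}{n}\right)^{n} \;=\; \int_0^x \frac{t}{n}\,(1 - t/n)^{n-1}\, e^{t-x}\, dt. \]

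With this identity in hand the proof concludes in one line. On the integration range $t \in [0,x] \subseteq [0,n]$ one has $0 \le 1 - t/n \le 1$ and hence $(1-t/n)^{n-1} \le 1$, using that $n \ge 1$ in every setting where the lemma is applied in this paper; moreover $e^{t-x} \le 1$ because $t \le x$. The integrand is therefore bounded above by $t/n$, so
\[ e^{-x} - (1-x/n)^n \;\le\; \int_0^x \frac{t}{n}\, dt \;=\; \frac{x^2}{2n}, \]
which is exactly the claim after rearranging.

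The main obstacle is spotting the right integral representation; once the auxiliary $f$ is in hand the estimate is immediate. A more direct approach, trying to show $g(x) := (1-x/n)^n - e^{-x} + x^2/(2n) \ge 0$ by iterated differentiation, quickly confirms $g(0) = g'(0) = g''(0) = 0$ but then stalls, since the third derivative $g'''(x) = -\tfrac{(n-1)(n-2)}{n^2}(1-x/n)^{n-3} + e^{-x}$ can dip below zero (for instance $g'''(3) < 0$ when $n = 10$), so the usual trick of propagating nonnegativity of a high derivative downward fails. Recasting the problem in integral form elegantly sidesteps this difficulty.
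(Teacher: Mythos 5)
The paper gives no proof of this lemma at all: it is quoted as Inequality 3.6.2 from the reference book of Vasi\'c and Mitrinovi\'c, so your argument is not competing with an in-paper proof but supplying a self-contained one, which is welcome. Your derivation is correct: the computation $f'(t) = -\tfrac{t}{n}(1-t/n)^{n-1}e^{t}$ checks out, the resulting identity $e^{-x}-(1-x/n)^n=\int_0^x \tfrac{t}{n}(1-t/n)^{n-1}e^{t-x}\,dt$ is exact, and the termwise bounds give $\int_0^x \tfrac{t}{n}\,dt = x^2/(2n)$ as claimed. The one point to make explicit is the hypothesis you quietly strengthen: the step $(1-t/n)^{n-1}\le 1$ requires $n\ge 1$, whereas the lemma as stated allows any real $n>0$. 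This is not a cosmetic gap in your proof but a defect of the statement itself --- for $n=x=\tfrac12$ the left-hand side is $0$ while the right-hand side is $e^{-1/2}-\tfrac14\approx 0.357$, so the inequality is false for $0<n<1$. Since every invocation in the paper takes $n$ to be the (integer) problem size, your restriction to $n\ge 1$ is exactly the right fix, and it would be worth recording it as a corrected hypothesis rather than as an aside. Your remark that the naive approach via $g(x)=(1-x/n)^n-e^{-x}+x^2/(2n)$ stalls because $g'''$ changes sign is accurate and explains why the integral representation is the right tool here.
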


\begin{lemma}[Lemma 1 in~\cite{AntipovDK22}]
\label{lem:bernoulli}
	For all $x \in \left[0, 1\right]$ and $\lambda > 0$ we have
	\begin{align*}
		1 - (1 - x)^\lambda \ge \frac{1}{2}\min\left\{1, \lambda x\right\}.
	\end{align*}
\end{lemma}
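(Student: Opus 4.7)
The plan is to split the argument at the threshold $\lambda x = 1$, since the minimum on the right-hand side is realized by different expressions on the two sides of that threshold. The only analytic input I would use throughout is the elementary estimate $(1-x)^\lambda \le e^{-\lambda x}$, which follows from $\ln(1-x) \le -x$ for $x \in [0,1)$ (with $x=1$ handled separately, as both sides are then $0$ or smaller) and which is valid for any real $\lambda>0$, not just integer values.

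For the case $\lambda x \ge 1$, the inequality reduces to showing $(1-x)^\lambda \le 1/2$. Here I would just chain $(1-x)^\lambda \le e^{-\lambda x} \le e^{-1}$ and observe that $e^{-1} < 1/2$, which finishes this case. For the complementary case $\lambda x \le 1$, the inequality reduces to $1-(1-x)^\lambda \ge \lambda x /2$. Using the same exponential estimate, it suffices to prove the one-variable inequality $1 - e^{-t} \ge t/2$ on $t \in [0,1]$, where $t := \lambda x$. The cleanest route is the Taylor bound $1 - e^{-t} \ge t - t^2/2$ (which follows by grouping consecutive terms of the alternating series $\sum_{k\ge 1} (-1)^{k+1} t^k / k!$, each pair being nonnegative for $t \le 2$), and then $t - t^2/2 \ge t/2$ because $t \le 1$; alternatively, a direct calculus check of $g(t) = 1 - e^{-t} - t/2$, noting $g(0) = 0$, unimodality with maximum at $t = \ln 2$, and $g(1) = 1/2 - 1/e > 0$, also works.

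I do not expect any real obstacle: both cases come down to the single inequality $(1-x)^\lambda \le e^{-\lambda x}$ followed by tiny scalar estimates, and the constant $1/2$ on the right-hand side is exactly what makes the case split work (since $e^{-1} < 1/2$ on the one side and $t - t^2/2 \ge t/2$ on $[0,1]$ on the other). The only point meriting a brief remark is that the argument does not require $\lambda$ to be an integer, which is useful because in applications $\lambda$ sometimes plays the role of a continuous parameter.
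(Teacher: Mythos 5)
Your proof is correct. Note that the paper does not prove this lemma at all---it imports it verbatim as Lemma~1 of~\cite{AntipovDK22}---but your argument (split at $\lambda x = 1$, use $(1-x)^\lambda \le e^{-\lambda x}$ throughout, then $e^{-1} < \tfrac12$ in one case and $1-e^{-t} \ge t - t^2/2 \ge t/2$ for $t\in[0,1]$ in the other) is the standard route and essentially the one given in that cited source, with the added and correct observation that nothing requires $\lambda$ to be an integer.
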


\section{Offspring Distribution}\label{sec:technique}

In this section we discuss the relationship between the parent individual $x$, its offspring $x'$ obtained via standard bit mutation, and the individual $\tilde x'$ obtained from the offspring via bit-wise noise. 

We start with the distribution of $\tilde x'$, when we have no information about the true offspring $x'$. Each bit of the noisy offspring $\tilde x'$ is different from the bit in the same position in $x$, if and only if either it was flipped by the mutation, but not by the noise, or it was flipped by the noise, but not by the mutation. The probability of this event is
\begin{align*}
    \frac{\chi}{n}\left(1 - \frac{q}{n}\right) + \frac{q}{n}\left(1 - \frac{\chi}{n}\right) = \frac{\chi}{n} + \frac{q}{n} - 2 \cdot \frac{\chi q}{n^2} = \frac{1}{n} \left(\chi + q - \frac{2q\chi}{n}\right).
\end{align*}
Since this event is independent for all bits, $\tilde x'$ is distributed in the same way as if it was created from $x$ via standard bit mutation with rate $\frac{r}{n}$, where $r = \chi + q - \frac{2q\chi}{n}$. Since in this paper we assume $\chi = \Theta(1)$ and $q = O(1)$, we have $r = \Theta(1)$ as well.

When we know the parent $x$ and also the noisy offspring $\tilde x'$, we can estimate the distribution of the noiseless offspring $x'$, which is an intermediate step when we get $\tilde x'$ from $x$. The following lemma shows how the bit values in $x'$ are distributed when we know those bit values in $x$ and $\tilde x'$. Since, naturally, these estimates depend not on the particular bit value in $x$, but only on whether the corresponding bits agree or differ with this value, we also formulate our result in this symmetric fashion.

\begin{lemma}\label{lem:probs}
    Consider an arbitrary bit position $i$. The distribution of this bit value $x'_i$ in $x'$, conditional on the values of this bit in $x$ and $\tilde x'$, are as shown in Table~\ref{tab:probs}.
\end{lemma}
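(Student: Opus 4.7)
The plan is to reduce the claim to a bit-wise Bayes' rule computation. Since standard bit mutation flips each bit independently with probability $\chi/n$ and the noise independently flips each bit with probability $q/n$, all four random transitions at position~$i$ (parent $\to$ offspring, offspring $\to$ noisy offspring) are independent of what happens at every other position. Consequently, conditioning on the entire strings $x$ and $\tilde x'$ is the same as conditioning on just $x_i$ and $\tilde x'_i$, and I may analyze position~$i$ in isolation. Moreover, the laws of mutation and noise are symmetric under flipping all bits, so without loss of generality I may fix $x_i = 0$ and only need to handle the two cases $\tilde x'_i = 0$ and $\tilde x'_i = 1$.

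Next I would write down the joint law of $(x'_i, \tilde x'_i)$ given $x_i = 0$. Letting $M_i$ and $N_i$ denote the indicators that position~$i$ is flipped by mutation resp.\ by noise, we have $x'_i = M_i$ and $\tilde x'_i = M_i \oplus N_i$, where $M_i \sim \mathrm{Ber}(\chi/n)$ and $N_i \sim \mathrm{Ber}(q/n)$ are independent. From this, the four joint probabilities $\Pr[x'_i = u, \tilde x'_i = v \mid x_i = 0]$ are immediate products of the form $(\chi/n)^{[u \neq 0]} (1-\chi/n)^{[u = 0]} \cdot (q/n)^{[u \neq v]} (1-q/n)^{[u = v]}$.

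Applying Bayes' rule then gives
\begin{align*}
\Pr[x'_i = u \mid x_i = 0, \tilde x'_i = v] = \frac{\Pr[x'_i = u, \tilde x'_i = v \mid x_i = 0]}{\sum_{u' \in \{0,1\}} \Pr[x'_i = u', \tilde x'_i = v \mid x_i = 0]}.
\end{align*}
Plugging in the four joint probabilities yields the four conditional probabilities stated in Table~\ref{tab:probs}: the case $\tilde x'_i = x_i$ produces a denominator $(1-\chi/n)(1-q/n) + (\chi/n)(q/n)$ with the two numerators $(1-\chi/n)(1-q/n)$ and $(\chi/n)(q/n)$, while the case $\tilde x'_i \neq x_i$ produces a denominator $(1-\chi/n)(q/n) + (\chi/n)(1-q/n)$ with numerators $(1-\chi/n)(q/n)$ and $(\chi/n)(1-q/n)$. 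Finally, I would invoke the symmetry argument (bit-flipping all three of $x$, $x'$, $\tilde x'$ leaves the joint law invariant) to extend the identity from $x_i = 0$ to $x_i = 1$.

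Conceptually there is no real obstacle; the statement is essentially a bookkeeping exercise. The only place where care is needed is in justifying that conditioning on the full observable string $\tilde x'$ and the parent $x$ reduces to conditioning on the pair $(x_i, \tilde x'_i)$, which rests on the independence of mutation and noise across bit positions, and this is precisely what makes the per-position analysis legitimate and what will later make the product-form calculations in Section~\ref{sec:rta} go through.
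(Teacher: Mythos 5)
Your proposal is correct and follows essentially the same route as the paper: a per-bit application of the definition of conditional probability (Bayes' rule) using the independence of the mutation and noise flip indicators, yielding exactly the numerators $(1-\chi/n)(1-q/n)$, $(\chi/n)(q/n)$, $(1-\chi/n)(q/n)$, $(\chi/n)(1-q/n)$ over the corresponding two denominators. The only detail you leave implicit is the asymptotic simplification producing the ``Approximate Value'' column of Table~\ref{tab:probs}, which is a routine expansion using $\chi = \Theta(1)$ and $q = O(1)$.
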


\begin{table}
\caption{Distribution of a bit value in the offspring $x'$ given the observed value in the noisy offspring~$\tilde x'$. }
\label{tab:probs}
    \begin{center}
        \begin{tabular}{cccc}\toprule
             Event $A$ & Event $B$  & $\Pr[A \mid B]$ & Approximate Value 
             \\\hline
             $x'_i= x_i$     & $\tilde x'_i = x_i$          &  $\frac{\left( 1 - \frac{\chi}{n} \right) \vphantom{\big)}\left( 1 - \frac{q}{n} \right) }{\left( 1 - \frac{\chi}{n} \right) \left( 1 - \frac{q}{n} \right)  + \frac{q\chi}{n ^ 2}}$ & $1 - \frac{q\chi}{n^2} - O\left(\frac{1}{n^3}\right)$ \\[7pt]
             $x'_i = x_i$     &  $\tilde x'_i \not = x_i$      &$\frac{\left( 1 - \frac{\chi}{n} \right) \frac{q}{n} }{\left( 1 - \frac{\chi}{n} \right) \frac{q}{n}  + \frac{\chi}{n} \left( 1 - \frac{q}{n} \right)}$ & $\frac{q}{q+\chi} \left(1 \pm o(1)\right)$ \\[7pt]
             $x'_i \not = x_i$ &  $\tilde x'_i = x_i$          & $\frac{\frac{\chi q}{n^2}}{\left( 1 - \frac{\chi}{n} \right) \left( 1 - \frac{q}{n} \right)  + \frac{q\chi}{n ^ 2}}$ & $\frac{q\chi}{n^2} + O\left(\frac{1}{n^3}\right)$\\[7pt]
             $x'_i \not = x_i$ &  $\tilde x'_i \not = x_i$      & $\frac{\left( 1 - \frac{q}{n} \right) \frac{\chi}{n} }{\left( 1 - \frac{\chi}{n} \right) \frac{q}{n}  + \frac{\chi}{n} \left( 1 - \frac{q}{n} \right)}$ & $\frac{\chi}{q+\chi} \left(1 \pm o(1)\right)$  \\
             \bottomrule
        \end{tabular}
    \end{center}
\end{table}

\begin{proof}
    We start with the first row of Table~\ref{tab:probs}, that is, we compute $\Pr[x'_i = x_i \mid \tilde x'_i = x_i]$. By the definition of conditional probabilities we have
    \begin{align*}
        \Pr[x'_i = x_i \mid \tilde x'_i = x_i] = \frac{\Pr[x'_i = x_i = \tilde x'_i]}{\Pr[x_i =  \tilde x'_i]}.
    \end{align*}
    The bit in position $i$ has the same values in all three individuals only if it has not been flipped, neither by mutation, nor by noise. The probability of this event is $(1 - \frac{\chi}{n})(1 - \frac{q}{n})$. This bit is equal in the parent $x$ and in the noisy offspring $\tilde x'$ when we either have not flipped it at all, or we flipped it twice. The probability of this event is  $(1 - \frac{\chi}{n})(1 - \frac{q}{n}) + \frac{q\chi}{n^2}$. Thus, we have

    \begin{align*}
        \Pr[&x_i' = x_i \mid \tilde x'_i = x_i] = \frac{\left( 1 - \frac{\chi}{n} \right) \left( 1 - \frac{q}{n} \right) }{\left( 1 - \frac{\chi}{n} \right) \left( 1 - \frac{q}{n} \right)  + \frac{q\chi}{n ^ 2}} \\
        &= 1 - \frac{\frac{q\chi}{n^2}}{\left( 1 - \frac{\chi}{n} \right) \left( 1 - \frac{q}{n} \right)  + \frac{q\chi}{n ^ 2}} = 1 - \frac{q\chi}{n^2} \cdot \frac{1}{\left(1 - O\left(\frac{1}{n}\right)\right)} \\
        & = 1 - \frac{q\chi}{n^2}  - O\left(\frac{1}{n^3}\right).
    \end{align*}

     To prove the second row of Table~\ref{tab:probs}, we use similar arguments to show that the event $ x_i = x'_i \ne \tilde x'_i$ happens only if we flip this bit by noise, but do not flip it by mutation. The probability of this is $(1 - \frac{\chi}{n})\frac{q}{n}$. We have $x_i \ne \tilde x'_i$,when we flip the $i$-th bit only once, the probability of which is $(1 - \frac{\chi}{n})\frac{q}{n} + (1 - \frac{q}{n})\frac{\chi}{n}$. Hence, we have
     \begin{align*}
       \Pr[&x_i' = x_i \mid \tilde x'_i \not = x_i] = \frac{\Pr[x_i' = x_i \not = \tilde x'_i]}{\Pr[x_i \not =  \tilde x'_i]} 
       = \frac{\left( 1 - \frac{\chi}{n} \right) \frac{q}{n} }{\left( 1 - \frac{\chi}{n} \right) \frac{q}{n}  + \frac{\chi}{n} \left( 1 - \frac{q}{n} \right)} \\
       &= \frac{q}{q + \chi\left(\frac{1 - \frac{q}{n}}{1 - \frac{\chi}{n}}\right)} = \frac{q}{q + \chi} \cdot \frac{q + \chi}{q + \chi \pm o\left(1\right)} = \frac{q}{q+\chi} \left(1 \pm o\left(1\right)\right),
    \end{align*}
    since $\chi = \Theta(1)$ and $q = O(1)$. 
    
    By regarding complementary events, we obtain the remaining two rows of Table~\ref{tab:probs}:    
    \begin{align*}
        \Pr[&x_i' \not = x_i \mid \tilde x'_i = x_i] = 1 - \Pr[x_i' = x_i \mid \tilde x'_i = x_i] \\
        &= \frac{\frac{\chi q}{n^2}}{\left( 1 - \frac{\chi}{n} \right) \left( 1 - \frac{q}{n} \right)  + \frac{q\chi}{n ^ 2}}  = \frac{q\chi}{n^2} \left(1 + O\left(\frac{1}{n} \right) \right)
        = \frac{q\chi}{n^2} + O\left(\frac{1}{n^3}\right),\\
         \Pr[&x_i' \not = x_i \mid \tilde x'_i \not = x_i] = 1 - \Pr[x_i' = x_i \mid \tilde x'_i \not = x_i] \\
         &= \frac{\chi}{q+\chi} \left(1 \pm o\left(1\right)\right).
         \qedhere
    \end{align*}
\end{proof}

Lemma~\ref{lem:probs} can be interpreted in the way that given $x$ and $\tilde x'$, the true offspring $x'$ is, asymptotically, a biased crossover between $x$ and $\tilde x'$, taking bit values from $x$ with probability $\frac{q}{q+\chi}$ and from $\tilde x'$ otherwise. Indeed, if $x$ and $\tilde x'$ agree in a bit value, then with high probability $x'$ also has this value in this bit (the first and the third lines of Table~\ref{tab:probs}). Where $x$ and $\tilde x'$ differ, the offspring $x'$ takes the value from $x$ with probability $\frac{q}{q+\chi} \pm o(1)$, see the second line of Table~\ref{tab:probs}, and otherwise from $\tilde x'$.  

By linearity of expectation, this observation can be lifted to distances, which is what we do in the following lemma. It implies, in particular, that if we observe some progress towards some target solution with respect to the noisy offspring, then a constant fraction of this progress is real, that is, witnessed by the true offspring. This insight will be the central tool in our later analyses. We are optimistic that it will be useful in other runtime analyses of EAs in the presence of noise as well.

\begin{lemma}\label{lem:expected-non-noisy}
    Consider some arbitrary, but fixed parent $x$. Let $d(\cdot)$ denote the Hamming distance to some arbitrary point $x^*$ in the search space. Then for any bit string $z$ we have
    \begin{align*} 
    E[d(x) &- d(x') \mid \tilde x' = z] \\
    &\geq \left(d(x) - d(z) \right) \cdot \frac{\chi}{q + \chi} 
    \cdot (1 \pm o(1)) - \frac{q\chi}{n} - O\left(\frac{1}{n^2}\right).
    \end{align*}
\end{lemma}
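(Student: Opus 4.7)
The idea is to decompose the Hamming distance change into per-bit contributions and apply Lemma~\ref{lem:probs} to each bit separately. Writing $d(y) = \sum_{i=1}^n \I[y_i \neq x^*_i]$ we obtain $d(x) - d(x') = \sum_{i=1}^n (\I[x_i \neq x^*_i] - \I[x'_i \neq x^*_i])$. Conditional on the fixed parent $x$ and on the observed noisy offspring $\tilde x' = z$, the bits $x'_1,\ldots,x'_n$ are mutually independent since $x'_i$ depends only on the mutation and noise acting at position $i$. Linearity of expectation therefore reduces the task to estimating $E[\I[x_i \neq x^*_i] - \I[x'_i \neq x^*_i] \mid \tilde x'_i = z_i]$ one position at a time.

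Next I would split the $n$ positions into the two groups already distinguished by Table~\ref{tab:probs}: those with $z_i = x_i$ (where mutation and noise either both fire or neither does) and those with $z_i \neq x_i$ (where exactly one of them fires). In each of the resulting four subcases I would rewrite $\{x'_i \neq x^*_i\}$ as $\{x'_i = x_i\}$ or $\{x'_i \neq x_i\}$ according to whether $x_i$ agrees with $x^*_i$, and then read off the per-bit conditional expectation from the corresponding row of Table~\ref{tab:probs}. The bookkeeping shows that every bit with $z_i = x_i$ satisfies $d(x)_i - d(z)_i = 0$ and contributes at most $q\chi/n^2 + O(1/n^3)$ in absolute value, whereas every bit with $z_i \neq x_i$ satisfies $d(x)_i - d(z)_i \in \{-1,+1\}$ and contributes exactly $(d(x)_i - d(z)_i) \cdot \tfrac{\chi}{q+\chi}(1 \pm O(1/n))$, with the \emph{same} multiplicative factor for every bit in this group (it is just a rescaling of the common ratio $(\chi - q\chi/n)/(q+\chi - 2q\chi/n)$).

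Finally I would recombine: the $z_i = x_i$ bits contribute in total at least $-q\chi/n - O(1/n^2)$ since there are at most $n$ of them, and the $z_i \neq x_i$ bits contribute exactly $(d(x) - d(z)) \cdot \tfrac{\chi}{q+\chi}(1 \pm O(1/n))$, because $\sum_{i:\, z_i \neq x_i}(d(x)_i - d(z)_i) = d(x) - d(z)$ (the other bits add zero) and the per-bit multiplicative factor does not depend on $i$. Summing the two pieces and relaxing $1 \pm O(1/n)$ to the advertised $1 \pm o(1)$ yields the claimed lower bound.

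The main obstacle I expect is the care needed to make the $1 \pm o(1)$ multiplier rigorous when $d(x) - d(z)$ is negative: one must first establish the exact per-bit formula with a specific multiplicative error $1 + \beta_n$, $|\beta_n| = O(1/n)$, that is common to all subcase-2 bits, and only then rewrite the clean sum as a lower bound with the symmetric $1 \pm o(1)$ notation. Once this is in place, everything else is a mechanical application of the four rows of Table~\ref{tab:probs}.
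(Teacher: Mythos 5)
Your proposal is correct and follows essentially the same route as the paper: the paper also partitions the bits into the four groups determined by whether $x_i$ and $z_i$ agree with each other and with $x^*_i$, reads the per-group success probabilities off Table~\ref{tab:probs}, and applies linearity of expectation (phrasing the per-bit sums as binomial counts, with $C+D=d(x)$, $A-D\le n$, and $B-C=d(z)-d(x)$ playing the role of your bookkeeping identities). Your extra care about a common multiplicative factor $1+\beta_n$ for the disagreeing bits is a valid and slightly more explicit treatment of the $1\pm o(1)$ term than the paper gives.
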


\begin{figure}[t]
    \centering
    \begin{tikzpicture}
        \draw (0, 1) rectangle (6, 1.5);
        \draw (1.5, 1) -- (1.5, 1.5);
        \draw (3, 1) -- (3, 1.5);
        \draw (4.5, 1) -- (4.5, 1.5);
        \node at (0.75, 1.25) {$11 \dots 1$};
        \node at (2.25, 1.25) {$11 \dots 1$};
        \node at (3.75, 1.25) {$00 \dots 0$};
        \node at (5.25, 1.25) {$00 \dots 0$}; 
        \node [left] at (-0.1, 1.25) {$x:$};
    
        \draw (0, 0) rectangle (6, 0.5);
        \draw (1.5, 0) -- (1.5, 0.5);
        \draw (3, 0) -- (3, 0.5);
        \draw (4.5, 0) -- (4.5, 0.5);
        \node at (0.75, 0.25) {$11 \dots 1$};
        \node at (2.25, 0.25) {$00 \dots 0$};
        \node at (3.75, 0.25) {$11 \dots 1$};
        \node at (5.25, 0.25) {$00 \dots 0$}; 
        \node [left] at (-0.1, 0.25) {$z:$};

        \draw [dashed] (1.5, 1) -- (1.5, 0.5);
        \draw [dashed] (3, 1) -- (3, 0.5);
        \draw [dashed] (4.5, 1) -- (4.5, 0.5);

        \draw [decorate,decoration={brace,amplitude=10pt}] (0, 1.5) -- (1.5, 1.5) node [midway,yshift=17pt] {$A$};
        \draw [decorate,decoration={brace,amplitude=10pt}] (1.5, 1.5) -- (3, 1.5) node [midway,yshift=17pt] {$B$};
        \draw [decorate,decoration={brace,amplitude=10pt}] (3, 1.5) -- (4.5, 1.5) node [midway,yshift=17pt] {$C$};
        \draw [decorate,decoration={brace,amplitude=10pt}] (4.5, 1.5) -- (6, 1.5) node [midway,yshift=17pt] {$D$};
    \end{tikzpicture}
    \caption{Illustration of the four groups of bits in the proof of Lemma~\ref{lem:expected-non-noisy}.}
    \label{fig:bit-groups}
\end{figure}
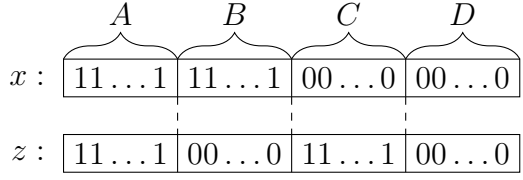

\begin{proof} 
  W.l.o.g. we assume that $x^*$ is the all-ones bit string\footnote{To generalize this proof for an arbitrary bit string $x^*$, it is sufficient to replace ``one-bits'' with ``bits which have the same value in $x^*$'' and ``zero-bits'' with ``bits which have a different value in $x^*$''. We avoid doing so to improve the readability of the proof.}, thus $d(\cdot)$ stands for the number of zero-bits in its argument. Consider some arbitrary $z$. We divide the bits into four groups depending on their values in $x$ and $z$ as illustrated in Figure~\ref{fig:bit-groups}. Let $A$ be the number of bits which are both ones in $x$ and $z$, let $B$ be the number of bits which are ones in $x$ and zeros in $z$, let $C$ be the number of bits which are zeros in $x$ and ones in $z$, and let $D$ be the number of bits with are both zeros in $x$ and $z$. Note that in each of these four groups the number of zero-bits in $x'$ follows a binomial distribution, with success probability as given in Table~\ref{tab:probs}. We denote the precise probabilities from rows $3$ and $4$ of Table~\ref{tab:probs} by $p_3$ and $p_4$ respectively (the ones in column $\Pr[A \mid B]$). Then the probabilities from rows $1$ and $2$ are $(1 - p_3)$ an $(1 - p_4)$ respectively. Hence, for any arbitrary $z$ we have

  \begin{align*}
    E&[d(x') \mid \tilde x' = z]
    = p_3 A + p_4 B + (1 - p_4) C + (1 - p_3) D \\
    &= D + C + p_3 (A - D) + p_4 (B - C) \\ 
    &= (D + C) + (A - D) \left (\frac{q\chi}{n^2} + O\left(\frac{1}{n^3}\right) \right) + (B - C)\frac{\chi}{q + \chi} (1 \pm o(1)) \\
    &\le d(x) + n\left (\frac{q\chi}{n^2} + O\left(\frac{1}{n^3}\right) \right) + \left(d(z) - d(x) \right)  \frac{\chi}{q + \chi} (1 \pm o(1)) \\
    &= d(x) + \frac{q\chi}{n} + O\left(\frac{1}{n^2}\right) + \left(d(z) - d(x) \right) \frac{\chi}{q + \chi} (1 \pm o(1)), \\
 \end{align*}
since $C + D = d(x)$, $A - D \leq n$, and $B - C = d(z) - d(x)$. By moving $d(x)$ to the left hand side and multiplying both sides by $-1$, we finish the proof.
\end{proof}

\section{Runtime Analysis}\label{sec:rta}

We use the results from Section~\ref{sec:technique} to prove upper bounds on the runtime of the \oplea and the \oclea. The main result of this section is the following theorem.

\begin{theorem}\label{thm:runtime}
    Consider a run of the \oclea or the \oplea with mutation rate $\frac{\chi}{n}$ on the \onemax problem with bit-wise noise with rate $\frac{q}{n}$. Here $\chi$ can be any positive constant (independent of the problem size~$n$). The parameter $q$ may depend on~$n$ as long as $q = O(1)$. If the population size $\lambda$ is at least $C \ln(n)$ for some constant $C$ depending on $\chi$ and $q$, then the expected number $E[T_F]$ of fitness evaluations until the optimum is sampled is
    \begin{align*}
        O\left(n\log(n) + n\lambda \frac{\log\log(\lambda)}{\log(\lambda)} \right).
    \end{align*}
\end{theorem}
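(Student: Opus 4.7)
The plan is to transfer the classical noiseless drift analysis of the \oplea on \onemax to the noisy setting, using Lemma \ref{lem:expected-non-noisy} to translate progress of the \emph{observable} noisy offspring into progress of the \emph{true} offspring that actually survives. As already noted at the start of Section \ref{sec:technique}, each noisy offspring $\tilde{x}^{(i)}$ is distributed like an offspring of $x$ under standard bit mutation with rate $r/n$, where $r := \chi+q-2q\chi/n = \Theta(1)$, and the $\lambda$ noisy offspring are mutually independent. Hence the noisy winner $\tilde y$ is distributed exactly like the mutation winner of a noiseless \oplea on \onemax with constant mutation rate $r/n$. Extracting the drift function underlying the runtime bound in \cite{DoerrK15} gives a monotonically increasing lower bound
\[
E[d(x)-d(\tilde y)\mid d(x)=k] \;\ge\; h_0(k,\lambda)
\]
with $\sum_{k=1}^{n} 1/h_0(k,\lambda) = O(n\log n/\lambda + n\log\log\lambda/\log\lambda)$ and, by Lemma \ref{lem:bernoulli} applied to a single-bit improvement, $h_0(k,\lambda) = \Omega(\lambda/n)$ for every $k\ge 1$.

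\textbf{Truth transfer.} Conditional on $\tilde y=z$, the true offspring $y$ depends only on $x$ and on the mutation/noise of the selected offspring, and thus is independent of the other $\lambda-1$ offspring. Lemma \ref{lem:expected-non-noisy} therefore applies unchanged with $\tilde x'$ replaced by $\tilde y$, and taking expectation over $\tilde y$ yields
\[
E[d(x)-d(y)\mid d(x)=k] \;\ge\; \tfrac{\chi}{q+\chi}(1-o(1))\,h_0(k,\lambda) \;-\; O\!\left(\tfrac{q\chi}{n}\right).
\]
Since $\chi/(q+\chi)=\Theta(1)$ and $h_0(k,\lambda) = \Omega(\log n / n)$ whenever $\lambda\ge C\ln n$, the $O(1/n)$ additive loss is absorbed into a constant factor, and the true per-iteration drift on $d$ is $\Theta(h_0(k,\lambda))$.

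\textbf{Selection bookkeeping and conclusion.} For the \oclea, $x_{t+1}=y$ always, so the previous display is directly the per-iteration drift on $d(x_t)$. For the \oplea, $x_{t+1}\in\{x_t,y\}$ according to the noisy comparison $f(\tilde y)\ge f(\tilde x_t)$; a Chernoff argument using $\lambda\ge C\ln n$ shows that a truly improving $y$ is rejected with probability $n^{-\omega(1)}$ per step, so the drift only shrinks by a further $(1-o(1))$ factor. In both cases Theorem \ref{theorem:variable_drift} gives $E[T] = O(n\log n/\lambda + n\log\log\lambda/\log\lambda)$ iterations, and multiplying by the $\lambda$ fitness evaluations per iteration gives the claimed bound $E[T_F] = O(n\log n + n\lambda\log\log\lambda/\log\lambda)$.

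The main obstacle, I expect, is the \oclea: since $y$ may be strictly worse than $x_t$, the process $(d(x_t))$ is non-monotone and Theorem \ref{theorem:variable_drift} does not apply verbatim. One must show that such backward moves are small in both probability and magnitude --- this is again where $\lambda=\Omega(\ln n)$ enters, via Chernoff-type concentration that keeps $d(\tilde y)$, and hence by Lemma \ref{lem:expected-non-noisy} also $d(y)$, within $O(1)$ of $d(x_t)$ with overwhelming probability. Making this precise, e.g.\ by stochastic domination against a monotone surrogate satisfying the hypotheses of Theorem \ref{theorem:variable_drift}, is where most of the detailed bookkeeping will lie.
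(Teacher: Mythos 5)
Your overall architecture matches the paper's: view the noisy winner $\tilde y$ as the winner of a noiseless process with mutation rate $r/n$, transfer its progress to the true offspring $y$ via Lemma~\ref{lem:expected-non-noisy} (your ``truth transfer'' step is sound and is exactly how the paper's Lemma~\ref{lem:drift} works), and finish with Theorem~\ref{theorem:variable_drift}. Two steps, however, are genuinely wrong or missing.

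First, your treatment of the \oplea's selection is incorrect. You claim that a Chernoff argument using $\lambda\ge C\ln n$ shows a truly improving $y$ is rejected with probability $n^{-\omega(1)}$. But the parent is re-evaluated exactly \emph{once} per iteration under bit-wise noise, so $\lambda$ plays no role here, and with probability $1-(1-\frac{q}{n})^n$, which is a constant for constant $q$, the noise alters the parent's observed fitness and may make it beat the improving winner. The rejection probability is therefore a constant, not $n^{-\omega(1)}$. The paper instead lower-bounds the acceptance probability of an improving winner by $\Pr[\tilde x = x]=(1-o(1))e^{-q}$ and pessimistically assumes all worsening winners are accepted, which costs only the constant factor $e^{-q}$ in eq.~\eqref{eq:delta-plus}; your final bound survives, but not by the argument you give.

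Second, the inequality $E[d(x)-d(\tilde y)\mid d(x)=k]\ge h_0(k,\lambda)>0$ that you want to extract from \cite{DoerrK15} is, in that reference, a statement about the \emph{elitist} drift $E[\max(0,\,d(x)-d(\tilde y))]$; here $\tilde y$ can be worse than $x$ and still become the new parent (always for the \oclea, and with constant probability for the \oplea by the point above), so the negative component of the drift must be bounded separately for \emph{both} algorithms, not only for the \oclea as your closing paragraph suggests. This is precisely where $\lambda\ge C\ln n$ enters the paper's proof: Lemma~\ref{lem:negative-drift} shows $\Delta^-(d)=\sum_{j\ge 1}\Pr[\tilde d_y\ge d+j]\le (n-d)\left(1-e^{-r}+o(1)\right)^\lambda\le \frac1n$, which is $o(h^+(d))$ because $h^+(d)=\Omega(\lambda/n)=\Omega(\log n/n)$. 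Once this is in place, no stochastic domination against a monotone surrogate is needed: Theorem~\ref{theorem:variable_drift} as stated requires only the drift inequality $E[X_t-X_{t+1}\mid X_t]\ge h(X_t)$ with $h$ monotone, not monotonicity of the process itself, so it applies verbatim to the non-monotone \oclea trajectory. The remaining bookkeeping (the three regimes of $h^+$ and the summation of $1/h^+$) is as you sketch and as the paper carries out in Lemmas~\ref{lem:positive-large-d} to~\ref{lem:positive-small-d}.
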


To prove Theorem~\ref{thm:runtime}, we use the variable drift theorem (Theorem~\ref{theorem:variable_drift}). The process we apply it to is $\{d_t\}_{t \in \N_0}$, which is the distance of the current individual (of either the \oclea or the \oplea) to the optimum after iteration $t$. Since we consider the \onemax problem, $d_t$ is equal to the number of zero-bits in that individual.

To bound the drift of $d_t$, we study what happens in one iteration of the two considered algorithms. We use the notation defined in Section~\ref{sec:algorithms}, and additionally we denote the individuals accepted as the parent for the next iteration by $x_{\text{com}}$ for the \oclea and by $x_{\text{plus}}$ for the \oplea. Note that $x_{\text{com}}$ is always equal to $y$ (the winning offspring), while $x_{\text{plus}}$ can be either $y$ or $x$, depending on their noisy comparison. We also denote the distances to the optimum from different individuals as follows. By $d$ and $\tilde d$ we denote the distance to the optimum from the parent $x$ and the noisy parent~$\tilde x$, respectively. By $d_y$ and $\tilde d_y$ we denote the distance to the optimum from the mutation winner $y$ and its noisy version $\tilde y$, respectively. By $d_{\text{com}}$ and $d_{\text{plus}}$ we denote distances from next generation parents $x_{\text{com}}$ and $x_{\text{plus}}$ to the optimum, respectively.

With this notation the drift in one iteration is defined as
\begin{align*}
    \Delta^{\text{com}}(d) &\coloneqq E[d - d_{\text{com}}] \text{ for the \oclea, and}\\
    \Delta^{\text{plus}}(d) &\coloneqq E[d - d_{\text{plus}}] \text{ for the \oplea}.
\end{align*}
We estimate these drifts in the following lemma.

\begin{lemma}
\label{lem:drift}
    
Let
    \begin{align*}
        \Delta^+(d) &\coloneqq \sum_{i = 1}^{d} \Pr[\tilde d_y \le d - i], \text{ and} \\
        \Delta^-(d) &\coloneqq \sum_{j = 1}^{n - d} \Pr[\tilde d_y \ge d + j].
    \end{align*}
    Then the drift of the current individual $x$ towards the optimum in one iteration is at least
    \begin{align}
    \label{eq:delta-com}
        \Delta^{\text{com}}(d) &\ge (1 - o(1)) \frac{\chi}{q + \chi} \left(\Delta^+(d) - \Delta^-(d)\right) - \frac{q\chi}{n} - O\left(\frac{1}{n^2}\right)
    \end{align}
    for the \oclea, and it is at least
    \begin{align}
    \label{eq:delta-plus}
    \begin{split}
        \Delta^{\text{plus}}(d) &\ge (1 - o(1)) \frac{\chi}{q + \chi} \left((1 - o(1))e^{-q}\Delta^+(d) - \Delta^-(d)\right) 
        - \frac{q\chi}{n} - O\left(\frac{1}{n^2}\right)
    \end{split}
    \end{align}
    for the \oplea.
\end{lemma}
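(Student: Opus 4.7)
The plan is to prove both bounds by applying Lemma~\ref{lem:expected-non-noisy} to the winning offspring~$y$ conditional on the value of its noisy version $\tilde y$, and then taking an outer expectation. Concretely, for an arbitrary realization $\tilde y = z$, Lemma~\ref{lem:expected-non-noisy} (applied with $x^* =$ all-ones, so that $d(\cdot)$ is the number of zero-bits) yields
\begin{align*}
    E[d - d_y \mid \tilde y = z] \ge (d - d(z)) \cdot \tfrac{\chi}{q+\chi}(1 \pm o(1)) - \tfrac{q\chi}{n} - O\!\left(\tfrac{1}{n^2}\right).
\end{align*}
The idea is then to integrate this bound against the distribution of $\tilde y$, and, for the \oplea, against the (independent) acceptance event.

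For the \oclea, since $x_{\text{com}} = y$ always, summing the above over $z$ gives
\begin{align*}
    \Delta^{\text{com}}(d) \ge (1 \pm o(1))\tfrac{\chi}{q+\chi} E[d - \tilde d_y] - \tfrac{q\chi}{n} - O\!\left(\tfrac{1}{n^2}\right).
\end{align*}
The next step is the standard identity $E[Y] = \sum_{i\ge 1}\Pr[Y \ge i] - \sum_{j\ge 1}\Pr[Y \le -j]$ applied to the integer random variable $Y = d - \tilde d_y \in [-(n-d) \mathinner{.\,.} d]$, which turns $E[d - \tilde d_y]$ precisely into $\Delta^+(d) - \Delta^-(d)$. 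This yields (\ref{eq:delta-com}).

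For the \oplea, I would first note the crucial independence: the noise applied to the parent, which produces $\tilde x$ (and hence $\tilde d$), is drawn independently from the mutations and noise applied to the $\lambda$ offspring (which produce $y$ and $\tilde y$). Hence, conditional on $\tilde y = z$, the acceptance event $\{\tilde d_y \le \tilde d\} = \{d(z) \le \tilde d\}$ is independent of $y$ (and thus of $d_y$), so
\begin{align*}
    E[d - d_{\text{plus}}] = \sum_z \Pr[\tilde y = z]\, \Pr[\tilde d \ge d(z)]\, E[d - d_y \mid \tilde y = z].
\end{align*}
I would then split the sum according to the sign of $d - d(z)$. For the \emph{positive} part ($d(z) \le d$), I would use the lower bound $\Pr[\tilde d \ge d(z)] \ge \Pr[\tilde d = d] \ge \Pr[\tilde x = x] = (1 - q/n)^n$, which by Lemma~\ref{lem:bound_inequality} is $(1-o(1))e^{-q}$. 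This produces, after reassembling with the tail-sum identity, the term $(1-o(1))e^{-q}\Delta^+(d)$. For the \emph{negative} part ($d(z) > d$), I would simply bound $\Pr[\tilde d \ge d(z)] \le 1$, which keeps the negative contribution at most $\tfrac{\chi}{q+\chi}(1 \pm o(1))\Delta^-(d)$ in absolute value. The additive $-\tfrac{q\chi}{n} - O(1/n^2)$ term is multiplied by the total weight $\sum_z \Pr[\tilde y = z]\Pr[\tilde d \ge d(z)] \le 1$, hence survives unchanged, giving (\ref{eq:delta-plus}).

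The main technical care needed, and where I expect the temptation to slip, is justifying the independence in the \oplea argument: although $y$ and $\tilde y$ are correlated (one comes from the other through noise), the quantity $\tilde d$ depends only on the noise applied to $x$, which is sampled independently of all offspring-side randomness. The remaining work is bookkeeping: carrying the $(1 \pm o(1))$ factor from Lemma~\ref{lem:expected-non-noisy} through the sum (it can be pulled out since it does not depend on $z$), making sure the tail-sum decomposition uses the correct ranges $i \in [1\mathinner{.\,.} d]$ and $j \in [1\mathinner{.\,.} n-d]$, and applying Lemma~\ref{lem:bound_inequality} to convert $(1-q/n)^n$ into $(1-o(1))e^{-q}$ under the assumption $q = O(1)$.
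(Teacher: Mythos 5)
Your proposal is correct and follows essentially the same route as the paper's proof: apply Lemma~\ref{lem:expected-non-noisy} conditional on $\tilde y = z$, sum over $z$ (using $E[d-\tilde d_y]=\Delta^+(d)-\Delta^-(d)$), and for the \oplea exploit the independence of the parent's noise from the offspring randomness, bounding the acceptance probability below by $\Pr[\tilde x = x]=(1-q/n)^n=(1-o(1))e^{-q}$ when $d(z)\le d$ and above by $1$ when $d(z)>d$. The independence observation you flag as the main point of care is exactly the step the paper also singles out.
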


Before we prove this lemma we note that $\Delta^+(d)$ and $\Delta^-(d)$ are the positive and the negative components of the drift of $\tilde y$ from $x$ respectively, since we have $E[d - \tilde d_y] = \Delta^+(d) - \Delta^-(d)$.

\begin{proof}
    In this proof we consider a fixed parent individual $x$, and therefore $d$ is also fixed (that is, it is not a random variable). For the \oclea we have $x_\text{com} = y$, hence we have 
    
    \begin{align*}
        \Delta^{\text{com}}(d) &=  E[d - d_{\text{com}}] = E[d - d_y] \\
        &= \sum_{Y \in \{0, 1\}^n} \Pr[\tilde y = Y] E[d - d_y \mid \tilde y = Y]. \\
    \end{align*}
    For any bit string $z \in \{0, 1\}^n$ we denote the distance to the optimum by $d_z$. Then by Lemma~\ref{lem:expected-non-noisy} we have
    \begin{align*}
        \Delta^{\text{com}}(d) 
        &\ge \sum_{z \in \{0, 1\}^n} \Pr[\tilde y = z] \bigg(\left(d - d_z \right) \frac{\chi}{q + \chi} (1 \pm o(1))  - \frac{q\chi}{n} - O\left(\frac{1}{n^2}\right)\bigg) \\
        &= \left((1 - o(1)) \frac{\chi}{q + \chi} \sum_{z \in \{0, 1\}^n} \Pr[\tilde y = z] \left(d - d_z \right)\right) - \frac{q\chi}{n} - O\left(\frac{1}{n^2}\right) \\
        &= (1 - o(1)) \frac{\chi}{q + \chi} E[d - \tilde d_y] - \frac{q\chi}{n} - O\left(\frac{1}{n^2}\right).
    \end{align*}

    Noting that $E[d - \tilde d_y] = \Delta^+(d) - \Delta^-(d)$ by the definition of $\Delta^+(d)$ and $\Delta^-(d)$ completes the proof of eq.~\eqref{eq:delta-com}.

    Estimating drift for the \oplea requires more effort, since we are not guaranteed that $x_\text{plus} = y$. This happens if and only if $\tilde y$ appears no worse than $\tilde x$ (the individual we obtain from $x$ via noise). Consequently, we have
    \begin{align}\label{eq:drift-plus}
        d - d_{\text{plus}} = (d - d_y) \I\left[\tilde d \ge \tilde d_y\right],
    \end{align}
    where all random variables are over the joint probability space of noise on $x$, noise on offspring and the mutation, $\I[\cdot]$ is an indicator random variable, and eq.~\eqref{eq:drift-plus} is an identity of random variables functions. Therefore, by the law of total probability, the drift for the \oplea is
    \begin{align*}
        \Delta&^{\text{plus}}(d) = E[d - d_{\text{plus}}] = E\left[(d - d_y) \I\left[\tilde d \ge \tilde d_y\right]\right] \\
        &= \sum_{z \in \{0, 1\}^n} \Pr[\tilde y = z] \cdot \bigg(\Pr\left[\tilde d \ge \tilde d_y \mid \tilde y = z\right]  \\
        &\quad\,\cdot E\left[d - d_y \mid \tilde y = z, \tilde d \ge \tilde d_y\right] + \Pr\left[\tilde d < \tilde d_y \mid \tilde y = z\right] \cdot 0 \bigg).
    \end{align*}
    
    Note that when we fix $x$ and $\tilde y$, then event $\tilde d \ge \tilde d_y$ depends only on the noise which affects $\tilde x$. This noise does not affect any offspring of $x$, hence the random variable $d - d_y$ conditioned on $\tilde y = z$ is independent of $\tilde d \ge \tilde d_y$, which together with Lemma~\ref{lem:expected-non-noisy} implies that 
    \begin{align*}
        E\Big[d - d_y &\mid \tilde y = z, \tilde d \ge \tilde d_y\Big] = E\left[d - d_y \mid \tilde y = z\right] \\
        &\ge (1 \pm o(1)) \frac{\chi}{q + \chi} \left(d - d_z \right) - \frac{q\chi}{n} - O\left(\frac{1}{n^2}\right).
    \end{align*}  
    Hence, we have
    \begin{align}\label{eq:delta-plus-prelim}
        \begin{split}
            \Delta^{\text{plus}}(d) &\ge (1 - o(1)) \frac{\chi}{q + \chi} 
            \cdot\sum_{z \in \{0, 1\}^n} \Pr[\tilde y = z] \Pr[\tilde d \ge \tilde d_y \mid \tilde y = z] (d - d_z) \\
            &\quad\,- \frac{q\chi}{n} - O\left(\frac{1}{n^2}\right).
        \end{split}
    \end{align}
    
    When $d_z > d$, we estimate the conditional probability $\Pr[\tilde d \ge \tilde d_y \mid \tilde y = z]$ by one. Otherwise, if $d_z < d$, this probability is at least the probability that $\tilde x = x$, that is, that the noise has not flipped any bit in $x$ when we compared it with $\tilde y$. This probability is $(1 - \frac{q}{n})^n = (1 - o(1))e^{-q}$. With this observation, similar to the \oclea, we can rewrite the sum above as follows
    \begin{align*}
        &\sum_{z \in \{0, 1\}^n} \Pr[\tilde y = z] \Pr[\tilde d \ge \tilde d_y \mid \tilde y = z] (d - d_z) \\
        &\qquad \ge \sum_{i = 1}^{d} (1 - o(1))e^{-q} i \Pr[\tilde d_y = d - i] - \sum_{j = 1}^{n - d} j \Pr[\tilde d_y = d + j] \\
        &\qquad = (1 - o(1))e^{-q} \Delta^+(d) - \Delta^-(d).
    \end{align*}
    Putting this into eq.~\eqref{eq:delta-plus-prelim} completes the proof of eq.~\eqref{eq:delta-plus}. 
\end{proof}

Above we obtained a weaker drift estimate for the \oplea than the \oclea. This is counter-intuitive -- one would feel that the \oplea should profit to some extent from the property that the current-best solution is participating in the selection of the next parent. The reason for our weaker bound is the estimate following eq.~\eqref{eq:delta-plus-prelim}, where we pessimistically estimated that inferior mutation winners $y$ are always accepted, but superior ones only when the parent is not subject to noise. We feel that this cannot be avoided in the general case (note that Lemma~\ref{lem:drift} is valid in general and is not specific for the optimization of \onemax). We are sure that when exploiting properties of \onemax, stronger bounds could be shown. We refrain from this both because we like our general, problem-independent approach and because all we can gain are constant factors, which are generally ignored in an asymptotic analysis as ours (note that we lose constant factors and lower order terms in the later part of the analysis anyway).

In the following lemmas we show estimates for $\Delta^+(d)$ and $\Delta^-(d)$, which naturally are specific to the \onemax problem. Lemmas~\ref{lem:positive-large-d} to~\ref{lem:positive-small-d} estimate the positive component of the drift for different distances. We note that implicitly these results exist in the proofs in~\cite{DoerrK15}, but distilling them from there is rather complicated.

\begin{lemma}[Positive drift, large distance]
\label{lem:positive-large-d}
    If $\lambda=\omega(1)$, then for any distance $d \ge \frac{n}{\ln\lambda}$ we have $\Pr[d - \tilde d_y \geq \lfloor\frac{\ln \lambda}{2 \ln \ln \lambda}\rfloor] \geq \frac{1}{4}$ and the positive component of the drift is at least 
    \begin{align*}
        \Delta^+(d) \ge \frac{1}{4} \left\lfloor \frac{\ln \lambda}{2 \ln \ln \lambda}\right\rfloor.
    \end{align*}
\end{lemma}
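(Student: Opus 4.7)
The plan is to reduce the problem to a clean ``best of $\lambda$'' mutation argument and then invoke Lemma~\ref{lem:bernoulli} together with the binomial bound Lemma~\ref{lem:binom-bound}. As observed just before Lemma~\ref{lem:probs}, each noisy offspring $\tilde x^{(i)}$ is distributed exactly as the result of a standard bit mutation applied to $x$ with the effective rate $r/n$, where $r = \chi + q - 2q\chi/n = \Theta(1)$. Hence $\tilde d_y = \min_{i \in [1..\lambda]} \tilde d^{(i)}$ is the minimum of $\lambda$ i.i.d.\ copies of the distance produced by one such mutation, and we have collapsed the noisy-selection step into a single composite mutation.

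Set $k := \lfloor \ln\lambda / (2\ln\ln\lambda) \rfloor$ and $p_k := \Pr[\tilde d^{(1)} \leq d - k]$. By independence of the $\lambda$ offspring and Lemma~\ref{lem:bernoulli},
\[
\Pr[\tilde d_y \leq d - k] \;=\; 1 - (1 - p_k)^\lambda \;\geq\; \tfrac{1}{2}\min\{1,\lambda p_k\},
\]
so it suffices to show $\lambda p_k \geq 1/2$. I lower-bound $p_k$ by the probability of the specific event that the mutation flips exactly $k$ of the $d$ zero-bits of $x$ and none of the remaining $n-k$ bits:
\[
p_k \;\geq\; \binom{d}{k}\!\left(\frac{r}{n}\right)^{\!k}\!\left(1-\frac{r}{n}\right)^{\!n-k}.
\]
Using Lemma~\ref{lem:binom-bound} to bound $\binom{d}{k}\geq (d/k)^k$, the hypothesis $d \geq n/\ln\lambda$ to give $(d/(kn))^k \geq (1/(k\ln\lambda))^k$, and Lemma~\ref{lem:bound_inequality} (with $r = O(1)$) to give $(1-r/n)^{n-k} \geq (1 - o(1)) e^{-r}$, I obtain $p_k \geq (r/(k\ln\lambda))^k (1-o(1)) e^{-r}$.

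The heart of the proof is then the calculation that $\lambda p_k \to \infty$. Taking logarithms,
\[
\ln(\lambda p_k) \;\geq\; \ln\lambda + k\ln r - k(\ln k + \ln\ln\lambda) - r - o(1).
\]
Since $k \leq \ln\lambda/(2\ln\ln\lambda)$, we have $\ln k \leq \ln\ln\lambda - \ln 2 - \ln\ln\ln\lambda$, so $k(\ln k + \ln\ln\lambda) \leq 2k\ln\ln\lambda - k(\ln 2 + \ln\ln\ln\lambda) \leq \ln\lambda - k(\ln 2 + \ln\ln\ln\lambda)$. Substituting back yields
\[
\ln(\lambda p_k) \;\geq\; k\bigl(\ln r + \ln 2 + \ln\ln\ln\lambda\bigr) - r - o(1),
\]
which tends to $+\infty$, because $\lambda = \omega(1)$ forces both $k \to \infty$ and $\ln\ln\ln\lambda \to \infty$. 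In particular $\lambda p_k \geq 1/2$ for all sufficiently large $\lambda$, which we may assume (absorbing small $\lambda$ into the constant $C$ in Theorem~\ref{thm:runtime} if needed).

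Finally, since $\Delta^+(d) = \sum_{i=1}^{d} \Pr[\tilde d_y \leq d-i]$ and the probabilities $\Pr[\tilde d_y \leq d-i]$ are non-increasing in~$i$, each of the first $k$ summands is at least $\Pr[\tilde d_y \leq d-k] \geq 1/4$, giving $\Delta^+(d) \geq k/4 = \tfrac14 \lfloor\ln\lambda/(2\ln\ln\lambda)\rfloor$, as claimed. The main obstacle is the delicate triple-logarithmic bookkeeping: the choice $k = \lfloor\ln\lambda/(2\ln\ln\lambda)\rfloor$ is precisely calibrated so that the dominant $-2k\ln\ln\lambda$ term cancels the $+\ln\lambda$ gained from the $\lambda$ independent offspring, leaving the positive $k\ln\ln\ln\lambda$ slack that makes $\lambda p_k$ diverge.
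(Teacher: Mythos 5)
Your proof is correct and takes essentially the same route as the paper's: both lower-bound $\Pr[\tilde d^{(1)} \le d-k]$ for $k=\lfloor\ln\lambda/(2\ln\ln\lambda)\rfloor$ by the same binomial expression, apply Lemmas~\ref{lem:binom-bound} and~\ref{lem:bound_inequality}, and exploit the same $\ln\ln\ln\lambda$ slack in the exponent before amplifying over the $\lambda$ independent offspring. The only (immaterial) difference is that you invoke Lemma~\ref{lem:bernoulli} with $\lambda p_k\ge 1/2$ where the paper shows $p_k\ge 1/\lambda$ and uses $1-(1-1/\lambda)^\lambda\ge 1-1/e$ directly.
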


\begin{proof}
    Let $x'$ be some arbitrary offspring and let $\tilde x'$ be the same offspring after noise. Denote the distances from them to the optimum by $d_{x'}$ and $\tilde d_{x'}$ correspondingly. Let also $p_i = \Pr[d - \tilde d_{x'} \geq i] \geq \Pr[d - \tilde d_{x'} = i] \geq \binom{d}{i} \left( \frac{r}{n} \right)^i \left(1 - \frac{r}{n} \right)^{n - i}$, since each noisy offspring is distributed as if it was created via standard bit mutation with rate $\frac{r}{n}$ where $r = \chi + q - \frac{2q\chi}{n}$. Then we compute 
    \begin{align*}
        p_{\lfloor\frac{\ln \lambda}{2 \ln \ln \lambda}\rfloor} &\geq \binom{d}{\lfloor\frac{\ln \lambda}{2 \ln \ln \lambda}\rfloor} \left( \frac{r}{n} \right)^{\lfloor\frac{\ln \lambda}{2 \ln \ln \lambda}\rfloor} \left(1 - \frac{r}{n} \right)^{n - \lfloor\frac{\ln \lambda}{2 \ln \ln \lambda}\rfloor} \\
        &\geq \left(\frac{d}{\lfloor\frac{\ln \lambda}{2 \ln \ln \lambda}\rfloor}\right)^{\lfloor\frac{\ln \lambda}{2 \ln \ln \lambda}\rfloor} \left( \frac{r}{n} \right)^{\lfloor\frac{\ln \lambda}{2 \ln \ln \lambda}\rfloor} \left(1 - \frac{r}{n} \right)^{n}\\
        &\geq \left(\frac{dr}{\lfloor\frac{\ln \lambda}{2 \ln \ln \lambda}\rfloor n}\right)^{\lfloor\frac{\ln \lambda}{2 \ln \ln \lambda}\rfloor} \left(e^{-r} - \frac{r^2}{2n}\right),
    \end{align*}
    where transition to the second line follows from Lemma~\ref{lem:binom-bound} and the next transition follows from Lemma~\ref{lem:bound_inequality}.
    When $n$ is large enough, the last term is close to $e^{-r}$, thus we can bound it from below with $e^{-2r}$. Hence, recalling that we consider the case when $d \ge \frac{n}{\ln \lambda}$, we have 
    \begin{align*}
        p_{\lfloor\frac{\ln \lambda}{2 \ln \ln \lambda}\rfloor} 
        &\geq \exp\left(\frac{\ln \lambda}{2 \ln \ln \lambda} \left( \ln {\frac{dr}{n}} - \ln {\frac{\ln \lambda}{2 \ln \ln \lambda}}\right)  - 2r\right) \\
        &\geq \exp\bigg( \frac{\ln \lambda}{2 \ln \ln \lambda} \bigg( \ln \frac{r}{\ln\lambda} - \ln \ln \lambda + \ln (2\ln \ln \lambda ) 
        - 2r \cdot \frac{2\ln\ln\lambda}{\ln\lambda}\bigg)\bigg). 
    \end{align*}
    Since $\ln(2\ln\ln(\lambda)) = \omega(1)$ and $\ln r$ and $2r\cdot \frac{2\ln\ln\lambda}{\ln\lambda}$ are both $O(1)$, when $n$ is large enough, we have   
    \begin{align*}
        p_{\lfloor\frac{\ln \lambda}{2 \ln \ln \lambda}\rfloor} \geq \exp\left(\frac{\ln \lambda}{2 \ln \ln \lambda} \cdot (-2\ln\ln\lambda) \right) = e ^{-\ln \lambda} = \frac{1}{\lambda}.
    \end{align*}
    The probability that at least one of the $\lambda$ individuals is by $\lfloor \frac{\ln \lambda}{2\ln\ln\lambda}\rfloor$ closer to the optimum than $x$ is therefore at least $(1 - \frac{1}{\lambda})^\lambda \geq \frac{1}{4}$. Hence, we have 
    \begin{align*}
        \Delta^+(d) \ge p_{\lfloor\frac{\ln \lambda}{2 \ln \ln \lambda}\rfloor} \left\lfloor\frac{\ln \lambda}{2 \ln \ln \lambda}\right\rfloor \ge  \frac{1}{4} \left\lfloor \frac{\ln \lambda}{2 \ln \ln \lambda}\right\rfloor.
    \end{align*}
\end{proof}

The next lemma covers the case when $d \in [\frac{n}{\lambda}, \frac{n}{\ln \lambda}]$.

\begin{lemma}[Positive drift, medium distance]
\label{lem:positive-medium-d}
    For $d \in [\frac{n}{\lambda}, \frac{n}{\ln \lambda}]$ we have $\Pr[d - \tilde d_{y} \geq 1] \geq \frac{r}{\lambda e^r} (1 - o(1))$ and the positive drift is at least $\Delta^+(d) \ge \frac{r}{2e^r} (1 - o(1)) = \Theta(1)$, where $r = q + \chi - \frac{2q\chi}{n}$.
\end{lemma}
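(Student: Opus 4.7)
The plan is to derive both statements from a single sharp lower bound on the probability that one noisy offspring $\tilde x'$ strictly improves on $x$. Since by Section~\ref{sec:technique} each $\tilde x'$ is distributed like a standard bit-mutation offspring with rate $r/n$, I would bound this probability from below by the probability of flipping exactly one zero-bit in $x$ and no other bit, namely $d \cdot (r/n)(1 - r/n)^{n-1}$. Because $r = \Theta(1)$, Lemma~\ref{lem:bound_inequality} yields $(1-r/n)^{n-1} \ge (1-r/n)^n \ge e^{-r}(1-o(1))$, and therefore
\begin{align*}
\Pr[d - \tilde d_{x'} \ge 1] \ge \frac{dr}{n}\, e^{-r}(1-o(1)).
\end{align*}

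The first claim is then immediate: $\tilde y$ being the best of $\lambda$ noisy offspring gives $\Pr[d - \tilde d_y \ge 1] \ge \Pr[d - \tilde d_{x'} \ge 1]$, and substituting $d \ge n/\lambda$ yields the stated bound $\frac{r}{\lambda e^r}(1-o(1))$.

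For the drift bound I would exploit the full offspring population through the standard identity
\begin{align*}
\Pr[d - \tilde d_y \ge 1] = 1 - \bigl(1 - \Pr[d - \tilde d_{x'} \ge 1]\bigr)^\lambda,
\end{align*}
and then invoke Lemma~\ref{lem:bernoulli} with $x := \frac{dr}{n} e^{-r}(1-o(1))$ to obtain
\begin{align*}
\Pr[d - \tilde d_y \ge 1] \ge \tfrac{1}{2}\min\Bigl\{1,\, \tfrac{\lambda d r}{n e^r}(1-o(1))\Bigr\}.
\end{align*}
A short case analysis then closes the argument: if the minimum equals the product term, $d \ge n/\lambda$ lower-bounds it by $\frac{r}{e^r}(1-o(1))$; if the minimum equals $1$, then since $r/e^r \le 1/e$ for every $r \ge 0$, the value $1/2$ already dominates $\frac{r}{2e^r}$ with a constant slack. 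Keeping only the $i=1$ term in the definition of $\Delta^+(d)$ gives $\Delta^+(d) \ge \Pr[\tilde d_y \le d - 1]$, so combining with the case analysis yields $\Delta^+(d) \ge \frac{r}{2e^r}(1-o(1))$, which is $\Theta(1)$ since $\chi$ and $q$ are constants.

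The only delicate point is that the range $d \in [n/\lambda,\, n/\ln\lambda]$ is exactly the regime in which the product $\lambda x$ in the invocation of Lemma~\ref{lem:bernoulli} can fall on either side of the threshold $1$; the case split handles both sides uniformly without sacrificing the constant factor $r/(2e^r)$. Every other step — absorbing the $r^2/(2n)$ correction from Lemma~\ref{lem:bound_inequality} and collecting the $(1 \pm o(1))$ factors — is routine because $r = \Theta(1)$.
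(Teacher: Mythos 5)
Your proposal is correct and follows essentially the same route as the paper's proof: lower-bounding $\Pr[d - \tilde d_{x'} \ge 1]$ by the single-bit-flip probability $\frac{dr}{n}(1-\frac{r}{n})^{n-1}$ via Lemma~\ref{lem:bound_inequality}, amplifying over the $\lambda$ offspring with Lemma~\ref{lem:bernoulli}, and resolving the minimum using $d \ge n/\lambda$ together with $r/e^r \le 1$. Your explicit case split on the minimum is just a spelled-out version of the paper's one-line justification, so there is nothing substantively different to report.
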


\begin{proof}
    Similar to Lemma~\ref{lem:positive-large-d}, let $p_1 = \Pr[d - \tilde d_{x'} \geq 1]$. Then we have
    \begin{align*}
    p_1 \geq \frac{dr}{n} \left( 1 - \frac{r}{n} \right)^{n - 1} = \frac{dr}{n}\left(\frac{1}{e^r}-o(1)\right) \geq \frac{r}{\lambda e^r} (1-o(1))
    \end{align*}
    by Lemma~\ref{lem:bound_inequality}. The probability that the noisy mutation winner is better than the current individual is the probability that at least one of the noisy offspring is better. Hence, we have
    \begin{align*}
        \Delta^+(d) &\geq 1 \cdot \Pr\left[d - \tilde d_{y}\geq 1\right] \geq 1 - (1 - p_1) ^\lambda \\
        &\geq \frac{1}{2} \min\left(1, \lambda p_1\right) \geq \frac{r}{2e^r} (1 - o(1))
    \end{align*}
    by Lemma~\ref{lem:bernoulli} and since $\frac{r}{e^r} \leq 1$ for all $r$. Since $r=\Theta(1)$, this lower bound is also $\Theta(1)$.
\end{proof}

Finally, we estimate the positive drift when $d \leq \frac{n}{\lambda}$.

\begin{lemma}[Positive drift, small distance]
\label{lem:positive-small-d}
    For $d \leq \frac{n}{\lambda}$ we have $\Pr[d - \tilde d_{y} \geq 1] \geq \frac{r}{\lambda e^r} (1 - o(1))$ and the positive drift is at least $\Delta^+(d) \ge \frac{\lambda dr}{2ne^r}(1-o(1)) $.
\end{lemma}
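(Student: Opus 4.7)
The plan is to mirror the argument of Lemma~\ref{lem:positive-medium-d}, but now exploit that for $d \leq n/\lambda$ the per-offspring improvement probability is so small that the ``min'' in Lemma~\ref{lem:bernoulli} is realized by its linear branch rather than being truncated to~$1$. First I would fix an arbitrary noisy offspring $\tilde x'$ and define $p_1 := \Pr[d - \tilde d_{x'} \geq 1]$. Since each noisy offspring is distributed like standard bit mutation of~$x$ with rate $r/n$ where $r = \chi + q - 2q\chi/n = \Theta(1)$, flipping exactly one of the $d$ zero-bits and none of the remaining bits gives
\begin{align*}
p_1 \;\geq\; \frac{dr}{n}\left(1 - \frac{r}{n}\right)^{n-1} \;\geq\; \frac{dr}{ne^r}(1-o(1)),
\end{align*}
where the last step uses Lemma~\ref{lem:bound_inequality} exactly as in the medium-distance case.

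Next I would pass from a single offspring to the noisy mutation winner $\tilde y$. Because $\tilde y$ is strictly closer to the optimum than $x$ iff at least one of the $\lambda$ noisy offspring is, we have $\Pr[d - \tilde d_y \geq 1] \geq 1 - (1-p_1)^\lambda$, and Lemma~\ref{lem:bernoulli} yields $1 - (1-p_1)^\lambda \geq \tfrac{1}{2}\min(1,\lambda p_1)$. The key observation specific to the small-distance regime is that for $d \leq n/\lambda$ we have
\begin{align*}
\lambda p_1 \;\leq\; \lambda \cdot \frac{dr}{n} \;\leq\; r \;=\; \Theta(1),
\end{align*}
so in particular $\lambda p_1 \leq 1$ (asymptotically, and up to choosing constants appropriately; since $r/e^r \leq 1$ for all $r\geq 0$, the relevant comparison $\lambda p_1 \leq 1$ holds for $n$ large enough). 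Therefore the minimum is attained by $\lambda p_1$, and we obtain
\begin{align*}
\Pr[d - \tilde d_y \geq 1] \;\geq\; \tfrac{1}{2}\lambda p_1 \;\geq\; \frac{\lambda d r}{2 n e^r}(1-o(1)).
\end{align*}

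Finally, bounding $\Delta^+(d) = \sum_{i=1}^{d}\Pr[\tilde d_y \leq d-i]$ below by the $i=1$ term gives $\Delta^+(d) \geq \Pr[d - \tilde d_y \geq 1] \geq \frac{\lambda d r}{2 n e^r}(1-o(1))$, which is the claim. There is no real obstacle: the argument is structurally identical to Lemma~\ref{lem:positive-medium-d}, and the only point where this case genuinely differs is the verification that $\lambda p_1 \leq 1$ so that Lemma~\ref{lem:bernoulli} gives the linear rather than the constant bound. Everything else is a clean rearrangement, so the proof will be short.
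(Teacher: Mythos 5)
Your proposal follows the paper's proof almost verbatim: same definition of $p_1$, same lower bound via Lemma~\ref{lem:bound_inequality}, same passage to the mutation winner via Lemma~\ref{lem:bernoulli}, same truncation of $\Delta^+(d)$ to its first summand. The one place where you deviate is in how you resolve the $\min(1,\lambda p_1)$, and there your justification is not correct: from $d \le n/\lambda$ you can only conclude $\lambda p_1 \le \lambda\cdot\frac{dr}{n} \le r$, and $r = \chi + q - \frac{2q\chi}{n}$ is merely $\Theta(1)$, not bounded by $1$ (e.g.\ $\chi = q = 1$ gives $r \approx 2$). So the minimum can genuinely be attained by the constant branch, and the parenthetical appeal to $r/e^r \le 1$ does not rescue the claim $\lambda p_1 \le 1$. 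Fortunately this slip is harmless for the final bound, and the paper's own (cleaner) argument shows why: one checks that the target quantity $\frac{\lambda d r}{2 n e^r}$ is at most $\frac12$ because $\frac{\lambda d}{n} \le 1$ and $\frac{r}{e^r} \le 1$, so \emph{whichever} branch of the minimum is active one has
\begin{align*}
\tfrac12\min(1,\lambda p_1) \;\ge\; \frac{\lambda d r}{2 n e^r}(1-o(1)).
\end{align*}
Replace your claim that the linear branch is always realized with this two-case (or branch-free) observation and the proof is complete.
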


\begin{proof}
    Similar to Lemma~\ref{lem:positive-large-d} and Lemma~\ref{lem:positive-medium-d}, let $p_1 = \Pr[d - \tilde d_{x'} \geq 1 ]$. Then we have
    \begin{align*}
        p_1 \geq \frac{dr}{n} \left( 1 - \frac{r}{n} \right)^{n - 1} = \frac{dr}{ne^r}(1-o(1)).
    \end{align*}
    Therefore, similar to the proof of Lemma~\ref{lem:positive-medium-d}, by Lemma~\ref{lem:bernoulli} we have
    \begin{align*}
        \Delta^+(d) &\geq \frac{1}{2} \min\left(1, \lambda p_1\right) \geq \frac{\lambda dr}{2ne^r}(1-o(1)) 
    \end{align*}
    due to $\frac{\lambda d}{n} \leq 1$ and $\frac{r}{e^r} \leq 1$.
\end{proof}

We proceed with bounding the negative component of the drift. A similar estimate for this negative part of the drift of the \oclea on the noiseless \onemax was shown in~\cite{RoweS14}. However, there this drift was bounded with a constant, which is not enough in our situation, so we give a stronger bound.

\begin{lemma}[Negative drift]
\label{lem:negative-drift}
If $\lambda > C\ln(n)$ for some sufficiently large constant $C$ which depends on $r$, then for all $d \in [1..n]$ we have $\Delta^-(d) \le  \frac{1}{n}.$
\end{lemma}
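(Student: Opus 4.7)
The plan is to exploit the fact that each noisy offspring coincides with its parent $x$ (no bit is flipped either by mutation or by noise) with a constant probability bounded away from zero. This implies that the event ``the noisy mutation winner $\tilde y$ is strictly worse than $x$'' requires \emph{all} $\lambda$ noisy offspring to be strictly worse, an event with probability exponentially small in~$\lambda$. Once this exponentially small probability is pushed below $1/n^2$ by choosing $\lambda = \Omega(\log n)$, the (at most) $n$ terms in the sum defining $\Delta^-(d)$ each contribute less than $1/n^2$, which is more than enough.

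First, I would recall the observation made at the start of Section~\ref{sec:technique}: each noisy offspring $\tilde x^{(i)}$ is distributed as the result of standard bit mutation with rate $\frac{r}{n}$ applied to $x$, where $r = \chi + q - \frac{2q\chi}{n} = \Theta(1)$. Hence, with probability at least
\[
\left(1 - \frac{r}{n}\right)^{n} \;\ge\; e^{-r} - \frac{r^2}{2n} \;\ge\; \tfrac{1}{2} e^{-r}
\]
(using Lemma~\ref{lem:bound_inequality} for the first inequality and $n$ large enough for the second), no bit of $x$ is flipped and so $\tilde d_{\tilde x^{(i)}} = d$. Setting $p_0 := 1 - \tfrac{1}{2} e^{-r} < 1$, this gives $\Pr[\tilde d_{\tilde x^{(i)}} > d] \le p_0$ for a single noisy offspring.

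Next, I would use that the $\lambda$ noisy offspring are mutually independent and that $\tilde d_y = \min_{i \in [\lambda]} \tilde d_{\tilde x^{(i)}}$. Therefore, for every $j \ge 1$,
\[
\Pr[\tilde d_y \ge d + j] \;\le\; \Pr[\tilde d_y > d] \;=\; \prod_{i=1}^{\lambda} \Pr\!\left[\tilde d_{\tilde x^{(i)}} > d\right] \;\le\; p_0^{\lambda}.
\]
Summing from $j = 1$ to $n - d$ and using $-\ln(1-x) \ge x$ with $x = \tfrac{1}{2}e^{-r}$, I obtain
\[
\Delta^-(d) \;\le\; n\, p_0^{\lambda} \;\le\; n \exp\!\left(-\tfrac{\lambda}{2} e^{-r}\right).
\]
It then suffices to pick any $C \ge 4 e^r$, so that $\lambda \ge C \ln n$ forces the right-hand side to be at most $n \cdot n^{-2} = 1/n$, as desired.

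The argument involves no real obstacle: its only quantitative content is choosing the constant $C$ explicitly in terms of $r$ (and hence of $\chi$ and $q$). Note that the bound is quite wasteful — it uses only that at least one of the $\lambda$ offspring is no worse than the parent with constant probability, and completely ignores that many offspring will in fact be strictly better — but this slack is exactly what allows a single clean inequality to carry the entire lemma.
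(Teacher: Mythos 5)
Your proposal is correct and follows essentially the same route as the paper: bound the probability that a single noisy offspring is strictly worse than the parent by a constant $p_0<1$ (the paper uses the event ``no one-bit of $x$ is flipped,'' giving $p_0 \le 1-e^{-r}+o(1)$, while you use the slightly more wasteful ``no bit at all is flipped''), raise it to the power $\lambda$ by independence of the offspring, and absorb the factor $n$ from the sum by choosing $C$ large enough that $p_0^{\lambda} \le n^{-2}$.
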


\begin{proof}
    If $\tilde d_y$ is at least~$d$, then for all $\lambda$ noisy offspring $\tilde x'$ their distances to the optimum $\tilde d'$ are at least~$d$.
    Then we have
    
    \begin{align*}
        \sum_{j=1}^{n-d}\Pr[d - \tilde d_{y} \geq j ] = \sum_{j=1}^{n-d}\left(\Pr[d - \tilde{d}' \geq j ]\right)^\lambda.
    \end{align*}
    
    Let $p_j \coloneqq \Pr[d - \tilde{d}' \geq j ]$. Since $p_1$ is at most the probability that at least one 1-bit in $x$ is flipped, we have  $p_1 \leq 1 - \left(1 - \frac{r}{n}\right)^{n-d} \le 1 - e^{-r} + o(1)$ (the last inequality is by Lemma~\ref{lem:bound_inequality}). Hence,
    \begin{align*}
        p_1^\lambda \leq \left(1-e^{-r} + o(1)\right)^\lambda \leq \frac{1}{n^2}
    \end{align*}
    for $\lambda \geq C \ln{n}$ with $C = -\frac{2}{\ln (1 - e^{-r} + o(1))}$. 
    For all $j \in \N$ we have $p_j \leq p_1$, and therefore $p_j^\lambda \leq \frac{1}{n^2}$. Consequently,
    \begin{align*}
        \Delta^-(d) = \sum_{j=1}^{n-d}\left(\Pr[d - \tilde{d}' \geq j ]\right)^\lambda=\sum_{j=1}^{n-d}p_j^\lambda \leq \frac{n-d}{n^2} \leq \frac{1}{n}. &\qedhere
    \end{align*}
\end{proof}

With Lemma~\ref{lem:drift} and with estimates given in Lemmas~\ref{lem:positive-large-d} to~\ref{lem:negative-drift} we are now in the position to prove our main result, Theorem~\ref{thm:runtime}. 

\begin{proof}[Proof of Theorem~\ref{thm:runtime}]
    Let $h^+(d)$ be the lower bound on $\Delta^+_t(d)$ derived in Lemmas~\ref{lem:positive-large-d} to~\ref{lem:positive-small-d}, that is,
    \begin{align}\label{eq:h-plus}
        \Delta^+(d) &\ge  h^+(d) \coloneqq 
        \begin{cases}
            \frac{1}{4} \left\lfloor \frac{\ln \lambda}{2\ln\ln\lambda} \right\rfloor &\text{ if } d > \frac{n}{\ln n}, \\
            \frac{r}{2e^r} (1 - o(1)) &\text{ if } d \in \left] \frac{n}{\lambda}, \frac{n}{\ln \lambda} \right], \\
            \frac{\lambda d r}{2 n e^r} (1 - o(1)) &\text{ if } d \leq \frac{n}{\lambda}.    
        \end{cases}
    \end{align}
    Note that this is a monotonically increasing function in $d$.
    Let also $h^-(d)$ be the upper bound on $\Delta^-(d)$ from Lemma~\ref{lem:negative-drift}, that is,
    \begin{align*}
        \Delta^-(d) \le  h^-(d) \coloneqq \frac{1}{n},
    \end{align*}
    for which we require $\lambda$ to be at least $C\ln(n)$, where $C$ is the constant from Lemma~\ref{lem:negative-drift}.

    Using Lemma~\ref{lem:drift}, we define lower bounds on the drift of two algorithms, that are,

    \begin{align*}
        \Delta^{\text{com}}(d) &\ge h^{\text{com}}(d)  \coloneqq (1 - o(1)) \frac{\chi}{q + \chi} \left(h^+(d) - h^-(d)\right) - \frac{q\chi}{n} - O\left(\frac{1}{n^2}\right), \\
        \Delta^{\text{plus}}(d) &\ge h^{\text{plus}}(d)  \coloneqq (1 - o(1)) \frac{\chi}{q + \chi} \left((1 - o(1))e^{-q}h^+(d) - h^-(d)\right) - \frac{q\chi}{n} - O\left(\frac{1}{n^2}\right).
    \end{align*}

    We now aim at showing that the first term (the one containing $h^+(d)$) is dominant in both $h^{\text{com}}(d)$ and $h^{\text{plus}}(d)$ for all values of $d$. For this we note that by Lemmas~\ref{lem:positive-large-d} to~\ref{lem:positive-small-d}, we have $h^+(d) = \omega(\frac{1}{n})$. By Lemma~\ref{lem:negative-drift} we have $h^-(d) = O(\frac{1}{n}) = o(h^+(d))$. By the constraints $\chi = \Theta(1)$ and $q = O(1)$ we also have $\frac{\chi}{q + \chi} = \Theta(1)$ and $\frac{q\chi}{n} = O(\frac{1}{n}) = o(h^+(d))$, and $e^{-q} = \Theta(1)$. Therefore, the expressions for the lower bounds on the drift are simplified as follows.
    \begin{align*}
        h^{\text{com}}(d) &= (1 - o(1)) \frac{\chi}{q + \chi} h^+(d) \\
        h^{\text{plus}}(d) &= (1 - o(1)) \frac{\chi e^{-q}}{q + \chi} h^+(d).
    \end{align*}
    
    Let $T^{\text{com}}_I$ and $T^{\text{plus}}_I$ be the number of iterations until the \oclea and the \oplea (respectively) find the optimal solution and accept it as $x$ for the first time, that is, the minimum $t$ when we have $d_t = 0$. Then by the variable drift theorem (Theorem~\ref{theorem:variable_drift}) we have
    \begin{align*}
        E[T^{\text{com}}_I]  &\le \sum_{d = 1}^n \frac{1}{h^\text{com}(d)} = \frac{(1 + o(1))(q + \chi)}{\chi} \sum_{d = 1}^n \frac{1}{h^+(d)},\\
        E[T^{\text{plus}}_I]  &\le \sum_{d = 1}^n \frac{1}{h^\text{plus}(d)} = \frac{(1 + o(1))(q + \chi)e^q}{\chi} \sum_{d = 1}^n \frac{1}{h^+(d)}.
    \end{align*}

    By eq.~\eqref{eq:h-plus}, we have
    \begin{align*}
        \sum_{d = 1}^n \frac{1}{h^+(d)} &\leq \sum_{d = \lfloor \frac{n}{\ln\lambda} \rfloor + 1}^n \frac{8}{\frac{\ln\lambda}{\ln\ln\lambda} - 2} + \sum_{d = \lfloor \frac{n}{\lambda} \rfloor + 1}^{\lfloor \frac{n}{\ln\lambda} \rfloor} \frac{2e^r}{r} + \sum_{d = 1}^{\lfloor \frac{n}{\lambda} \rfloor} \frac{2ne^r}{\lambda dr} \\
        &\le n \cdot \frac{9\ln\ln\lambda}{\ln\lambda} + \frac{n}{\ln\lambda} \cdot \frac{2e^r}{r} + \frac{2ne^r}{\lambda r} \left(\ln\frac{n}{\lambda} + 1\right) \\
        &= O\left( \frac{n \log\log\lambda}{\log \lambda} \right) + O\left( \frac{n}{\log\lambda} \right) + O\left(\frac{n}{\lambda} \log \frac{n}{\lambda}\right) \\
        &=O\left(\frac{n}{\lambda} \left(\log \frac{n}{\lambda} + \frac{\lambda \log\log\lambda}{\log \lambda}\right)\right).
    \end{align*}
    Hence, we have
        \begin{align*}
        E[T^{\text{plus}}_I], E[T^{\text{com}}_I] &=O\left(\frac{n}{\lambda} \left(\log \frac{n}{\lambda} + \frac{\lambda \log\log\lambda}{\log \lambda}\right)\right).
    \end{align*}

    Since  each iteration of the \oclea consists of $\lambda$ fitness evaluations, and each iteration of the \oplea consists of $\lambda + 1 = \Theta(\lambda)$ fitness evaluations, the expected number of fitness evaluations until we find the optimum is at most
    \begin{align*}
        E[T^{\text{plus}}_F], E[T^{\text{com}}_F] = O\left(n\log\frac{n}{\lambda} + n \frac{\lambda\log\log\lambda}{\log\lambda}\right). &\qedhere
    \end{align*}
\end{proof}

We note that for all $\lambda = O(\frac{\log(n)\log\log(n)}{\log\log\log(n)})$ the first term of the upper bound in Theorem~\ref{thm:runtime} is dominating, hence the expected runtime is $O(n\log(n))$, that is, the same as in the noiseless case.

\section{Experiments}\label{sec:experiments}

In this section we describe the results of our empirical study, the goal of which is to give a better understanding of the noisy optimization process and to answer some questions for which our theoretical analysis was not detailed enough.

When we estimated the drift of the \oplea in Lemma~\ref{lem:drift}, we pessimistically assumed that the elitism of this algorithm can be only harmful. 
Namely, in our proofs, the comparison with the parent does not save us from decreasing the fitness (since the parent's fitness might be decreased by the noise) and it might prevent us from increasing fitness (since the parent's fitness might be increased by the noise). 
This pessimistic view eased the proof without harming the asymptotic order of magnitude of the runtime, so it was fully appropriate for the theoretical analysis. Still, it raises the question which of the two algorithms is better when looking at the precise runtime rather than its asymptotics. 
To understand this question, we ran the \oplea and the \oclea on \onemax with problem sizes $n = \{2^6, 2^7,\dots, 2^{14}\}$, with $100$ repetitions for each value of~$n$. We used a strong noise with $q=1$ to maximize its effect, standard bit mutation with $\chi = 1$, and we chose $\lambda = \lceil C \ln(n)\rceil$ with $C = 14$, since for the chosen values of $q$ and $\chi$ this $C$ would satisfy the assumptions of Lemma~\ref{lem:negative-drift}. 
The mean runtimes of these runs and their standard deviations are shown in the plot in Figure~\ref{plot:oplea_vs_oclea}. For better visual comparison, we normalize both runtimes by $n\ln(n)$, which is our upper bound on their asymptotical runtime and the well-known lower bound for all unary unbiased black-box algorithms~\cite{LehreW12}.
In the plot we see that for all problem sizes the difference between the mean runtimes of two algorithms is very small compared to the standard deviations of their runtimes. 
Therefore, we conclude that in practice for this population size the elitism does not slow down the algorithm by more than some lower order terms (but neither can speed it up). We note that this is well-known for the noisefree setting, simply because with high probability some offspring equals the parent, so removing the parent in the \oclea has no negative effect. For the noisy setting, as our proof shows, this is less obvious.

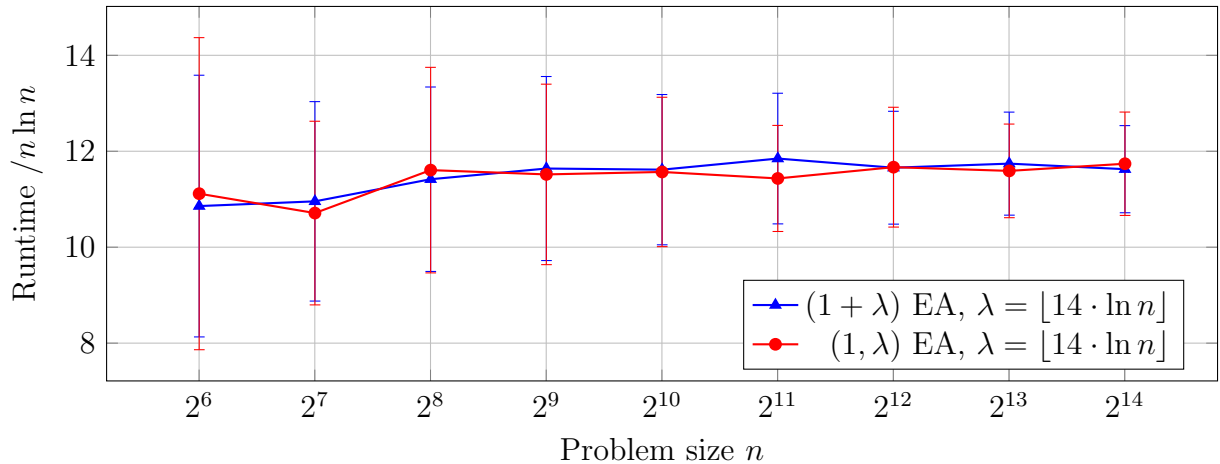
\begin{figure}[t]
\begin{tikzpicture}
    \begin{axis}[width=\linewidth, height=0.3\textheight,
                  anchor=outer north east,
                  legend pos = south east,
                  legend cell align={right},
                  cycle list name=runtimes, grid=major, xmode=log, log base x=2,
                  xlabel={Problem size $n$}, ylabel={Runtime $/n \ln n$}, title={}]
        		\addplot plot [error bars/.cd, y dir=both, y explicit] coordinates
		{(64,10.857106726723293)+-(0,2.727455603945208)(128,10.955594278807789)+-(0,2.0786901765560963)(256,11.416761570547305)+-(0,1.9224262858565924)(512,11.639668599249937)+-(0,1.9181262030346706)(1024,11.616002827668826)+-(0,1.5663573195843616)(2048,11.847607894601424)+-(0,1.3616279564956961)(4096,11.656842967155791)+-(0,1.1761826317525652)(8192,11.74182299649935)+-(0,1.0739811706868558)(16384,11.626008203579145)+-(0,0.9082320242441468)};
		\addlegendentry{\oplea, $\lambda=\lfloor14 \cdot \ln n\rfloor$};
		\addplot plot [error bars/.cd, y dir=both, y explicit] coordinates
		{(64,11.115439307482475)+-(0,3.2526591093743624)(128,10.711414922835901)+-(0,1.9141526050473219)(256,11.606692936233092)+-(0,2.1423027784018074)(512,11.517205455750869)+-(0,1.8810427903196965)(1024,11.568399526846997)+-(0,1.5571903581323732)(2048,11.432972677731732)+-(0,1.105550441937512)(4096,11.668106097202187)+-(0,1.2488644411322969)(8192,11.590666904895091)+-(0,0.9759516989811412)(16384,11.739484644479061)+-(0,1.0773332501056068)};
		\addlegendentry{\oclea, $\lambda=\lfloor14 \cdot \ln n\rfloor$};
    \end{axis}
\end{tikzpicture}
\caption{Mean runtimes (number of fitness evaluations, normalized by $n \ln n$) and their standard deviations over $100$ runs of the \oplea and the \oclea on \onemax with noise rates $q = 1$ with varying problem size $n$.}
\label{plot:oplea_vs_oclea}
\end{figure}

We also compare the \oplea with $\lambda = \lfloor\ln(n)\rfloor$ and the \oea on \onemax in the presence of bit-wise noise with small but constant value of $q=0.01$. Ours and the previous theoretical works showed that in this setting the asymptotic runtime of the \oplea is much better, but the question is if this translates to significant runtime differences already for common problem sizes. Our results, depicted in Figure~\ref{plot:oplea_vs_oea_w_reevaluations}, show that the difference between the algorithms is notable already on the smallest problem size $n = 2^6$. We also note that since we use a logarithmic scale for both axes, the convex plot of the \oea indicates that its runtime is super-polynomial in problem size $n$, which lines up with the theoretical results in~\cite{GiessenK16}.

\begin{figure}[!h]
\begin{tikzpicture}
    \begin{loglogaxis}[width=\linewidth, height=0.3\textheight,
                  legend pos = north west,
                  legend cell align={left},
                  cycle list name=runtimes, grid=major, 
                  log base x=2,
                  log base y=10,
                  xlabel={Problem size $n$}, ylabel={Runtime}, title={}]

        		\addplot plot [error bars/.cd, y dir=both, y explicit] coordinates
		{(64,859.7)+-(0,292.05891528936417)(128,2016.15)+-(0,576.7390462765635)(256,4490.16)+-(0,1122.369410844754)(512,9926.84)+-(0,2043.907051311287)(1024,22549.59)+-(0,5899.628376592885)};
		\addlegendentry{\oplea, $\lambda=\lfloor\ln(n)\rfloor$};
		\addplot plot [error bars/.cd, y dir=both, y explicit] coordinates
		{(64,1405.88)+-(0,592.1409845636426)(128,3852.68)+-(0,1757.7260815041686)(256,13421.7)+-(0,8039.610012307811)(512,125442.64)+-(0,101046.55746847784)(1024,12002447.42)+-(0,11927519.757871604)};
		\addlegendentry{\oea};
    \end{loglogaxis}

\end{tikzpicture}
\caption{Mean runtimes (number of fitness evaluations) and their standard deviations over $100$ runs of the \oplea and the \oea on \onemax with noise rates $q = 0.01$ with varying problem size $n$.}
\label{plot:oplea_vs_oea_w_reevaluations}
\end{figure}




\section{Conclusion}
\label{sec:conclusion}

In this work, we have proven that both the \oplea and \oclea with at least logarithmic population sizes are very robust to noise, that is, with up to constant noise probabilities they optimize the \onemax benchmark in asymptotically the same time as if no noise was present. 
This significantly improves the state of the art for the \oplea, considerably reducing both the required population size and the runtime guarantee, and this is the first such result for the \oclea.

The reason for this progress is the general observation that the noisefree offspring can be seen as a biased uniform crossover between the parent and the noisy offspring. From this we proved that the true progress is at least a constant fraction of the noisy progress. 
The latter can be analyzed with known methods because the noisy offspring, asymptotically, has the distribution of an offspring obtained from bit-wise mutation with a rate that is the sum of the mutation and the noise rate. We are optimistic that this or analogous arguments will find applications in future runtime analyses again. 

Next steps to continue this line of research could include the first analysis of the \mclea~\cite{AntipovD21algo} in the presence of noise or an analysis of the algorithms studied in this work on the \leadingones benchmark, which generally is more affected by noise.

\subsubsection*{Acknowledgements}
This research benefited from the support of the FMJH Program Gaspard Monge for optimization and operations research and their interactions with data science. It was also supported by Australian Research Council through grant DP190103894.


\begin{thebibliography}{DNDD{\etalchar{+}}18}

\bibitem[AD11]{AugerD11}
Anne Auger and Benjamin Doerr, editors.
\newblock {\em Theory of Randomized Search Heuristics}.
\newblock World Scientific Publishing, 2011.

\bibitem[AD21]{AntipovD21algo}
Denis Antipov and Benjamin Doerr.
\newblock A tight runtime analysis for the ${(\mu+\lambda)}$~{EA}.
\newblock {\em Algorithmica}, 83:1054--1095, 2021.

\bibitem[ADK22]{AntipovDK22}
Denis Antipov, Benjamin Doerr, and Vitalii Karavaev.
\newblock A rigorous runtime analysis of the ${(1 + (\lambda,\lambda))}$ {GA}
  on jump functions.
\newblock {\em Algorithmica}, 84:1573--1602, 2022.

\bibitem[DDE15]{DoerrDE15}
Benjamin Doerr, Carola Doerr, and Franziska Ebel.
\newblock From black-box complexity to designing new genetic algorithms.
\newblock {\em Theoretical Computer Science}, 567:87--104, 2015.

\bibitem[DDHW23]{DinotDHW23}
Matthieu Dinot, Benjamin Doerr, Ulysse Hennebelle, and Sebastian Will.
\newblock Runtime analyses of multi-objective evolutionary algorithms in the
  presence of noise.
\newblock In {\em International Joint Conference on Artificial Intelligence,
  {IJCAI} 2023}, pages 5549--5557. ijcai.org, 2023.

\bibitem[DDY20]{DoerrDY20}
Benjamin Doerr, Carola Doerr, and Jing Yang.
\newblock Optimal parameter choices via precise black-box analysis.
\newblock {\em Theoretical Computer Science}, 801:1--34, 2020.

\bibitem[DHK12]{DoerrHK12ants}
Benjamin Doerr, Ashish~Ranjan Hota, and Timo K{\"o}tzing.
\newblock Ants easily solve stochastic shortest path problems.
\newblock In {\em Genetic and Evolutionary Computation Conference, GECCO 2012},
  pages 17--24. ACM, 2012.

\bibitem[DJS{\etalchar{+}}13]{DoerrJSWZ13}
Benjamin Doerr, Thomas Jansen, Dirk Sudholt, Carola Winzen, and Christine
  Zarges.
\newblock Mutation rate matters even when optimizing monotone functions.
\newblock {\em Evolutionary Computation}, 21:1--21, 2013.

\bibitem[DJW02]{DrosteJW02}
Stefan Droste, Thomas Jansen, and Ingo Wegener.
\newblock On the analysis of the (1+1) evolutionary algorithm.
\newblock {\em Theoretical Computer Science}, 276:51--81, 2002.

\bibitem[DK15]{DoerrK15}
Benjamin Doerr and Marvin K{\"{u}}nnemann.
\newblock Optimizing linear functions with the $(1+\lambda)$ evolutionary
  algorithm---different asymptotic runtimes for different instances.
\newblock {\em Theoretical Computer Science}, 561:3--23, 2015.

\bibitem[DL15]{DangL15foga}
Duc{-}Cuong Dang and Per~Kristian Lehre.
\newblock Efficient optimisation of noisy fitness functions with
  population-based evolutionary algorithms.
\newblock In {\em Foundations of Genetic Algorithms, FOGA 2015}, pages 62--68.
  {ACM}, 2015.

\bibitem[DN20]{DoerrN20}
Benjamin Doerr and Frank Neumann, editors.
\newblock {\em Theory of Evolutionary Computation---Recent Developments in
  Discrete Optimization}.
\newblock Springer, 2020.
\newblock Also available at
  \url{http://www.lix.polytechnique.fr/Labo/Benjamin.Doerr/doerr_neumann_book.html}.

\bibitem[DNDD{\etalchar{+}}18]{Dang-NhuDDIN18}
Rapha{\"e}l Dang-Nhu, Thibault Dardinier, Benjamin Doerr, Gautier Izacard, and
  Dorian Nogneng.
\newblock A new analysis method for evolutionary optimization of dynamic and
  noisy objective functions.
\newblock In {\em Genetic and Evolutionary Computation Conference, GECCO 2018},
  pages 1467--1474. ACM, 2018.

\bibitem[Doe20]{Doerr20bookchapter}
Benjamin Doerr.
\newblock Probabilistic tools for the analysis of randomized optimization
  heuristics.
\newblock In Benjamin Doerr and Frank Neumann, editors, {\em Theory of
  Evolutionary Computation: Recent Developments in Discrete Optimization},
  pages 1--87. Springer, 2020.
\newblock Also available at \url{https://arxiv.org/abs/1801.06733}.

\bibitem[DOSS23]{DangOSS23gecco}
Duc-Cuong Dang, Andre Opris, Bahare Salehi, and Dirk Sudholt.
\newblock Analysing the robustness of {NSGA-II} under noise.
\newblock In {\em Genetic and Evolutionary Computation Conference, GECCO 2023},
  pages 642--651. {ACM}, 2023.

\bibitem[Dro02]{Droste02}
Stefan Droste.
\newblock Analysis of the (1+1) {EA} for a dynamically changing
  {OneMax}-variant.
\newblock In {\em Congress on Evolutionary Computation, CEC 2002}, pages
  55--60. IEEE, 2002.

\bibitem[Dro04]{Droste04}
Stefan Droste.
\newblock Analysis of the {(1+1)} {EA} for a noisy {OneMax}.
\newblock In {\em Genetic and Evolutionary Computation Conference, GECCO 2004},
  pages 1088--1099. Springer, 2004.

\bibitem[Dro05]{Droste05}
Stefan Droste.
\newblock Not all linear functions are equally difficult for the compact
  genetic algorithm.
\newblock In {\em Genetic and Evolutionary Computation Conference, {GECCO}
  2005}, pages 679--686. {ACM}, 2005.

\bibitem[DS19]{DoerrS19}
Benjamin Doerr and Andrew~M. Sutton.
\newblock When resampling to cope with noise, use median, not mean.
\newblock In {\em Genetic and Evolutionary Computation Conference, GECCO 2019},
  pages 242--248. {ACM}, 2019.

\bibitem[DWY21]{DoerrWY21}
Benjamin Doerr, Carsten Witt, and Jing Yang.
\newblock Runtime analysis for self-adaptive mutation rates.
\newblock {\em Algorithmica}, 83:1012--1053, 2021.

\bibitem[FKKS16]{FriedrichKKS16}
Tobias Friedrich, Timo K{\"{o}}tzing, Martin~S. Krejca, and Andrew~M. Sutton.
\newblock Robustness of ant colony optimization to noise.
\newblock {\em Evolutionary Computation}, 24:237--254, 2016.

\bibitem[FKKS17]{FriedrichKKS17}
Tobias Friedrich, Timo K{\"{o}}tzing, Martin~S. Krejca, and Andrew~M. Sutton.
\newblock The compact genetic algorithm is efficient under extreme {G}aussian
  noise.
\newblock {\em {IEEE} Transactions on Evolutionary Computation}, 21:477--490,
  2017.

\bibitem[FKNR22]{FriedrichKNR22}
Tobias Friedrich, Timo K{\"{o}}tzing, Frank Neumann, and Aishwarya
  Radhakrishnan.
\newblock Theoretical study of optimizing rugged landscapes with the {cGA}.
\newblock In {\em Parallel Problem Solving from Nature, {PPSN} 2022, Part
  {II}}, pages 586--599. Springer, 2022.

\bibitem[GK16]{GiessenK16}
Christian Gie{\ss}en and Timo K{\"{o}}tzing.
\newblock Robustness of populations in stochastic environments.
\newblock {\em Algorithmica}, 75:462--489, 2016.

\bibitem[Gut08]{Gutjahr08}
Walter~J. Gutjahr.
\newblock First steps to the runtime complexity analysis of ant colony
  optimization.
\newblock {\em Computers \& Operations Research}, 35:2711--2727, 2008.

\bibitem[Jan13]{Jansen13}
Thomas Jansen.
\newblock {\em Analyzing Evolutionary Algorithms -- The Computer Science
  Perspective}.
\newblock Springer, 2013.

\bibitem[JJW05]{JansenJW05}
Thomas Jansen, Kenneth A.~De Jong, and Ingo Wegener.
\newblock On the choice of the offspring population size in evolutionary
  algorithms.
\newblock {\em Evolutionary Computation}, 13:413--440, 2005.

\bibitem[JLS23]{JorritsmaLS23}
Joost Jorritsma, Johannes Lengler, and Dirk Sudholt.
\newblock Comma selection outperforms plus selection on {OneMax} with randomly
  planted optima.
\newblock In {\em Genetic and Evolutionary Computation Conference, GECCO 2023},
  pages 1602--1610. {ACM}, 2023.

\bibitem[Joh10]{Johannsen10}
Daniel Johannsen.
\newblock {\em Random Combinatorial Structures and Randomized Search
  Heuristics}.
\newblock PhD thesis, Universit\"at des Saarlandes, 2010.

\bibitem[LN21]{LehreN21}
Per~Kristian Lehre and Phan Trung~Hai Nguyen.
\newblock Runtime analyses of the population-based univariate estimation of
  distribution algorithms on {LeadingOnes}.
\newblock {\em Algorithmica}, 83:3238--3280, 2021.

\bibitem[LQ23]{LehreQ23}
Per~Kristian Lehre and Xiaoyu Qin.
\newblock Self-adaptation can improve the noise-tolerance of evolutionary
  algorithms.
\newblock In {\em Foundations of Genetic Algorithms, {FOGA} 2023}, pages
  105--116. {ACM}, 2023.

\bibitem[LQ24]{LehreQ24}
Per~Kristian Lehre and Xiaoyu Qin.
\newblock More precise runtime analyses of non-elitist evolutionary algorithms
  in uncertain environments.
\newblock {\em Algorithmica}, 86:396--441, 2024.

\bibitem[LS11]{LassigS11}
J{\"{o}}rg L{\"{a}}ssig and Dirk Sudholt.
\newblock Adaptive population models for offspring populations and parallel
  evolutionary algorithms.
\newblock In {\em Foundations of Genetic Algorithms, FOGA 2011}, pages
  181--192. {ACM}, 2011.

\bibitem[LW12]{LehreW12}
Per~Kristian Lehre and Carsten Witt.
\newblock Black-box search by unbiased variation.
\newblock {\em Algorithmica}, 64:623--642, 2012.

\bibitem[MRC09]{MitavskiyRC09}
Boris Mitavskiy, Jonathan~E. Rowe, and Chris Cannings.
\newblock Theoretical analysis of local search strategies to optimize network
  communication subject to preserving the total number of links.
\newblock {\em International Journal on Intelligent Computing and Cybernetics},
  2:243--284, 2009.

\bibitem[M{\"{u}}h92]{Muhlenbein92}
Heinz M{\"{u}}hlenbein.
\newblock How genetic algorithms really work: mutation and hillclimbing.
\newblock In {\em Parallel Problem Solving from Nature, PPSN 1992}, pages
  15--26. Elsevier, 1992.

\bibitem[NW09]{NeumannW09}
Frank Neumann and Carsten Witt.
\newblock Runtime analysis of a simple ant colony optimization algorithm.
\newblock {\em Algorithmica}, 54:243--255, 2009.

\bibitem[NW10]{NeumannW10}
Frank Neumann and Carsten Witt.
\newblock {\em Bioinspired Computation in Combinatorial Optimization --
  Algorithms and Their Computational Complexity}.
\newblock Springer, 2010.

\bibitem[QBJT19]{QianBJT19}
Chao Qian, Chao Bian, Wu~Jiang, and Ke~Tang.
\newblock Running time analysis of the ${(1+1)}$-{EA} for {OneMax} and
  {LeadingOnes} under bit-wise noise.
\newblock {\em Algorithmica}, 81:749--795, 2019.

\bibitem[QBY{\etalchar{+}}21]{QianBYTY21}
Chao Qian, Chao Bian, Yang Yu, Ke~Tang, and Xin Yao.
\newblock Analysis of noisy evolutionary optimization when sampling fails.
\newblock {\em Algorithmica}, 83:940--975, 2021.

\bibitem[QYT{\etalchar{+}}18]{QianYTJYZ18}
Chao Qian, Yang Yu, Ke~Tang, Yaochu Jin, Xin Yao, and Zhi{-}Hua Zhou.
\newblock On the effectiveness of sampling for evolutionary optimization in
  noisy environments.
\newblock {\em Evolutionary Computation}, 26:237--267, 2018.

\bibitem[RA19]{RoweA19}
Jonathan~E. Rowe and Aishwaryaprajna.
\newblock The benefits and limitations of voting mechanisms in evolutionary
  optimisation.
\newblock In {\em Foundations of Genetic Algorithms, {FOGA} 2019}, pages
  34--42. {ACM}, 2019.

\bibitem[RS14]{RoweS14}
Jonathan~E. Rowe and Dirk Sudholt.
\newblock The choice of the offspring population size in the ${(1,\lambda)}$
  evolutionary algorithm.
\newblock {\em Theoretical Computer Science}, 545:20--38, 2014.

\bibitem[Rud97]{Rudolph97}
G{\"u}nter Rudolph.
\newblock {\em Convergence Properties of Evolutionary Algorithms}.
\newblock Verlag Dr.~Kov{\v a}c, 1997.

\bibitem[ST12]{SudholtT12}
Dirk Sudholt and Christian Thyssen.
\newblock A simple ant colony optimizer for stochastic shortest path problems.
\newblock {\em Algorithmica}, 64:643--672, 2012.

\bibitem[Sud21]{Sudholt21}
Dirk Sudholt.
\newblock Analysing the robustness of evolutionary algorithms to noise: refined
  runtime bounds and an example where noise is beneficial.
\newblock {\em Algorithmica}, 83:976--1011, 2021.

\bibitem[VM12]{vasic2012analytic}
Petar~M. Vasi\'c and Dragoslav~S. Mitrinovi\'c.
\newblock {\em Analytic Inequalities}.
\newblock Grundlehren der mathematischen Wissenschaften. Springer Berlin
  Heidelberg, 2012.

\bibitem[Wit06]{Witt06}
Carsten Witt.
\newblock Runtime analysis of the ($\mu$ + 1) {EA} on simple pseudo-{B}oolean
  functions.
\newblock {\em Evolutionary Computation}, 14:65--86, 2006.

\bibitem[ZD23]{ZhengD23}
Weijie Zheng and Benjamin Doerr.
\newblock From understanding genetic drift to a smart-restart mechanism for
  estimation-of-distribution algorithms.
\newblock {\em Journal of Machine Learning Research}, 24:1--40, 2023.

\bibitem[ZYQ19]{ZhouYQ19}
Zhi-Hua Zhou, Yang Yu, and Chao Qian.
\newblock {\em Evolutionary Learning: Advances in Theories and Algorithms}.
\newblock Springer, 2019.

\end{thebibliography}

\newcommand{\etalchar}[1]{$^{#1}$}

}

\end{document}